\documentclass{article}

\usepackage{graphicx}
\usepackage[utf8]{inputenc} 
\usepackage[T1]{fontenc}    
\usepackage[colorlinks]{hyperref}       
\usepackage{url}            
\usepackage{booktabs}       
\usepackage{amsfonts}       
\usepackage{nicefrac}       
\usepackage{microtype}      
\usepackage[parfill]{parskip}
\usepackage{algorithm,algorithmic}
\usepackage{amsmath,amsthm,amssymb,bbm}
\usepackage{mathtools}
\usepackage{cases}
\usepackage{comment}
\usepackage{caption}
\usepackage{subcaption}
\usepackage{algorithm,algorithmic}
\usepackage{color}
\usepackage{appendix}
\usepackage{xspace}
\usepackage{enumitem}
\usepackage[english]{babel}
\usepackage{authblk}

\usepackage{cleveref}
\crefformat{equation}{(#2#1#3)}
\crefrangeformat{equation}{(#3#1#4) to~(#5#2#6)}
\crefname{equation}{}{}
\Crefname{equation}{}{}




\crefname{definition}{\textbf{definition}}{definitions}
\Crefname{definition}{Definition}{Definitions}
\crefname{assumption}{\textbf{assumption}}{assumptions}
\Crefname{assumption}{Assumption}{Assumptions}
\definecolor{maroon}{RGB}{192,80,77}

\newcommand{\explain}[2]{\underset{\mathclap{\overset{\uparrow}{#2}}}{#1}}

\newcommand\independent{\protect\mathpalette{\protect\independenT}{\perp}}
\def\independenT#1#2{\mathrel{\rlap{$#1#2$}\mkern2mu{#1#2}}}

\newtheorem{theorem}{Theorem}
\newtheorem{lemma}[theorem]{Lemma}

\newtheorem{proposition}[theorem]{Proposition}



	\AtEndDocument{\refstepcounter{theorem}\label{finalthm}}

\newcommand{\vct}{\boldsymbol }
\newcommand{\mat}{\mathbf}
\newcommand{\red}[1]{\textcolor{red}{#1}}

\newcommand{\R}{\mathbb{R}}

\usepackage{mdwlist}

\def\E{\mathbb{E}}
\def\P{\mathbb{P}}

\def\diag{\mathrm{diag}}

\def\R{\mathbb{R}}

\def\cF{\mathcal{F}}

\def\cH{\mathcal{H}}

\def\cS{\mathcal{S}}

\def\cX{\mathcal{X}}
\def\cY{\mathcal{Y}}



%
\newcommand{\algolong}{Black Box Shift Estimation}
\newcommand{\algo}{\textsc{BBSE}}

%
%

\usepackage[accepted]{icml2018}

\icmltitlerunning{Detecting and Correcting for Label Shift with Black Box Predictors}

\begin{document}

\twocolumn[
\icmltitle{Detecting and Correcting for Label Shift with Black Box Predictors}

\icmlsetsymbol{equal}{*}

\begin{icmlauthorlist}
\icmlauthor{Zachary C. Lipton}{equal,cmu,amazon}
\icmlauthor{Yu-Xiang Wang}{equal,amazon,ucsb}
\icmlauthor{Alexander J. Smola}{amazon}
\end{icmlauthorlist}

\icmlaffiliation{cmu}{Carnegie Mellon University, Pittsburgh, PA}
\icmlaffiliation{amazon}{Amazon AI, Palo Alto, CA}
\icmlaffiliation{ucsb}{UC Santa Barbara, CA}

\icmlcorrespondingauthor{Zachary C. Lipton}{zlipton@cmu.edu }
\icmlcorrespondingauthor{Yu-Xiang Wang}{yuxiangw@amazon.com}
\icmlcorrespondingauthor{Alexander J. Smola}{smola@amazon.com}


\vskip 0.3in
]

\printAffiliationsAndNotice{\icmlEqualContribution}

\begin{abstract}
Faced with  distribution shift between training and test set, 
we wish to \emph{detect} and \emph{quantify} the shift, and to
\emph{correct} our classifiers without test set labels. 
Motivated by medical diagnosis,
where diseases (targets),
cause symptoms (observations),
we focus on \emph{label shift},
where the label marginal $p(y)$
changes but the conditional $p(\vct x| y)$ does not.
We propose \algolong{} (\algo{})
to estimate the test distribution $p(y)$.
\algo{} exploits arbitrary black box predictors 
to reduce dimensionality prior to shift correction.
While better predictors give tighter estimates,
\algo{} works even when predictors are biased, inaccurate, or uncalibrated,
so long as their confusion matrices are invertible. 
We prove 
\algo{}'s consistency, 
bound its error, 
and introduce a statistical test
that uses \algo{} to detect shift.
We also leverage \algo{}
to correct classifiers.
Experiments demonstrate accurate estimates and improved prediction, 
even on high-dimensional datasets of natural images.

\end{abstract}

\section{Introduction}
\label{sec:intro}

Assume that in August
we train a pneumonia predictor.
Our features consist
of chest X-rays administered 
in the previous year (distribution $P$)
and the labels binary indicators 
of whether a physician diagnoses the patient with pneumonia.
We train a model $f$
to predict pneumonia given an X-ray image.
Assume that in the training set 
$.1\%$ of patients have pneumonia.
We deploy $f$ in the clinic 
and for several months,
it reliably predicts roughly $.1\%$ positive. 

Fast-forward to January (distribution $Q$):
Running $f$ on the last week's data, 
we find that $5\%$ of patients 
are predicted to have pneumonia!
Because $f$ remains fixed,
the shift must owe to a 
change in the marginal $p(\vct x)$, violating the familiar iid assumption.
Absent familiar guarantees, 
we wonder: 
\emph{Is $f$ still accurate? 
What's the real current rate of pneumonia?}
Shouldn't our classifier, 
trained under an obsolete prior, 
underestimate pneumonia when uncertain?
Thus, we might suspect that the real prevalence is greater than $5\%$.

Given only \textbf{labeled training data}, 
and \textbf{unlabeled test data}, 
we desire to:
(i) detect distribution shift, 
(ii) quantify it, 
and (iii) correct our model
to perform well on the new data.
Absent assumptions
on how $p(y,\vct x)$ changes,
the task is impossible.
However, under assumptions
about what P and Q have in common,
we can still make headway.
Two candidates are \emph{covariate shift}
(where $p(y|\vct x)$ does not change) 
and \emph{label shift}
(where $p(\vct x | y)$ does not change). 
\citet{scholkopf2012causal}
observe that covariate shift corresponds 
to causal learning (predicting effects),
and label shift to anticausal learning (predicting causes).

%
%
We focus on label shift,
motivated by diagnosis 
(diseases cause symptoms)
and recognition tasks 
(objects cause sensory observations).
During a pneumonia outbreak, 
$p(y|\vct x)$ (e.g. flu given cough) 
might rise but the manifestations 
of the disease 
$p(\vct x |y)$ might not change.
Formally, under label shift, 
we can factorize the target distribution as
$$q(y, \vct x) = q(y) p(\vct x | y).$$

By contrast, under the \emph{covariate shift} assumption, $ q(y, \vct x) = q(\vct x) p(y | \vct x)$,
e.g. the distribution 
of radiologic findings $p(\vct x)$ changes, 
but the conditional probability of pneumonia $p(y| \vct x)$ remains constant.  
To see how this can go wrong, consider:
what if our only feature were \emph{cough}?
Normally, cough may not (strongly) indicate pneumonia.
But during an epidemic, 
$\P(\text{pneumonia}|\text{cough})$ might go up substantially.
Despite its importance, 
label shift is comparatively under-investigated,
perhaps because given samples 
from both $p(\vct x)$ and $q(\vct x)$, 
quantifying $q(\vct x)/p(\vct x)$ is more intuitive.

We introduce \algolong{} (\algo{})
to estimate label shift 
using a black box predictor $f$.
\algo{} estimates the
ratios $w_l = q(y_l)/p(y_l)$ 
for each label $l$,
requiring only that the
expected confusion
matrix is invertible \footnote{
For degenerate confusion matrices,
a variant using soft predictions may be preferable.}.
We estimate $\hat{\vct w}$ by solving a linear system $A \vct x = b$ where $A$ is the confusion matrix of $f$ estimated on training data (from P) and $b$ is the average output of $f$ calculated on test samples (from Q). We make the following contributions:
\begin{enumerate}
\item Consistency and error bounds for \algo{}.
\item Applications of \algo{} to statistical tests for detecting distribution label shift 
\item Model correction through importance-weighted Empirical Risk Minimization.
\item A comprehensive empirical validation of \algo{}.
\end{enumerate}
%
%

Compared to approaches based on Kernel Mean Matching (KMM) \citep{zhang2013domain},  EM \citep{chan2005word}, and Bayesian inference \citep{storkey2009training},  
\algo{} offers the following advantages:
(i) Accuracy does not depend on data dimensionality;
(ii) Works with arbitrary black box predictors, even biased, uncalibrated, or inaccurate models;
(iii) Exploits advances in deep learning while retaining theoretical guarantees:
better predictors provably lower sample complexity; and (iv) Due to generality, 
could be a standard diagnostic / corrective tool 
for arbitrary ML models.

\vspace{-3px}

\section{Prior Work}
\label{sec:background}
Despite its wide applicability, 
learning under label shift 
with unknown $q(y)$ 
remains curiously under-explored.
Noting the difficulty of the problem, \citet{storkey2009training}
proposes placing a (meta-)prior over $p(y)$
and inferring the posterior distribution from unlabeled test data.
Their approach requires explicitly estimating $p(\vct x|y)$, 
which may not be feasible in high-dimensional datasets. 
\citet{chan2005word} infer $q(y)$ 
using EM but their method also requires estimating $p(\vct x|y)$.
\citet{scholkopf2012causal} articulates connections 
between label shift and anti-causal learning 
and \citet{zhang2013domain} extend 
the kernel mean matching approach 
due to \cite{gretton2009covariate} 
to the label shift problem.
When $q(y)$ is known,
label shift simplifies to 
the problem of
changing base rates 
\cite{bishop1995neural,elkan2001foundations}.
Previous methods require estimating $q(\vct x)$, $q(\vct x)/p(\vct x)$, or $p(\vct x |y)$, often relying on kernel methods, 
which scale poorly with dataset size
and underperform on high-dimensional data. 


Covariate shift, also called \emph{sample selection bias},
is well-studied \citep{zadrozny2004learning,huang2007correcting,
sugiyama2008direct, gretton2009covariate}.
\citet{shimodaira2000improving}
proposed correcting models 
via weighting examples in ERM
by $q(\vct x)/p(\vct x)$.
Later works estimate
importance weights from the available data, 
e.g., \citet{gretton2009covariate}
propose kernel mean matching 
to re-weight training points.

The earliest relevant work to ours comes from econometrics and addresses the use of non-random samples to estimate behavior.
\citet{heckman1977sample} addresses sample selection bias, while \cite{manski1977estimation}
investigates estimating parameters under
\emph{choice-based} and \emph{endogenous stratified sampling},
cases analogous to a shift in the label distribution.
Also related, \citet{rosenbaum1983central} introduce propensity scoring to design
unbiased experiments. 
Finally, we note a connection 
to cognitive science work showing 
that humans classify items differently
depending on other items 
they appear alongside \citep{zhu2010cognitive}.

Post-submission, we learned of antecedents 
for our estimator 
in epidemiology \citep{buck1966comparison} 
and revisited
by \citet{forman2008quantifying,saerens2002adjusting}.
These papers do not develop our theoretical guarantees or explore the modern ML setting where $\vct x$ is massively higher-dimensional than $y$, 
bolstering the value of dimensionality reduction.

\vspace{-3px}

\section{Problem setup}
\label{sec:setup}
We use $\vct x\in \cX = \R^d$ and $y\in \cY$ to denote the feature and label variables. 
For simplicity, we assume that $\cY$ 
is a discrete domain equivalent to $\{1,2,...,k\}$. 
Let $P,Q$ be the source and target distributions defined on $\cX\times \cY$.
We use $p,q$ to denote the probability density function (pdf) or
probability mass function (pmf) associated with $P$ and $Q$
respectively. The random variable of interest is clear from
context. For example, $p(y)$ is the p.m.f. of $y\sim P$ and $q(\vct
x)$ is the p.d.f. of $\vct x\sim Q$. Moreover, $p(y=i)$ and $q(y=i)$
are short for $\P_P(y = i)$ and $\P_Q(y=i)$ respectively, where $\P(S)
:= \E[\mathbf{1}(S)]$ denotes the probability of an event $S$ and
$\E[\cdot]$ denotes the expectation. Subscripts $P$ and $Q$ on these
operators make the referenced distribution clear. 


In standard supervised 
learning, 
the learner observes training data 
$(\vct x_1,y_1),(\vct x_2,y_2),...,(\vct x_n,y_n)$ 
drawn iid from a training (or \emph{source}) distribution $P$. 
We denote the collection of feature vectors 
by $X\in\R^{n\times d}$ and the label by $\vct y$. 
Under Domain Adaptation (DA), 
the learner additionally observes 
a collection of samples $X' = [\vct x'_1; ... ;\vct x'_m]$
drawn iid from a test (or \emph{target}) distribution 
$Q$.
Our objective in DA is to
predict well for samples 
drawn from $Q$. 

In general, this task is 
impossible -- $P$ and $Q$ might not share support. 
This paper considers 
$3$ extra assumptions:
\begin{enumerate*}
	\item[A.1]   The \emph{label shift} (also known as \emph{target shift}) assumption
	$$
	p(\vct x|y)  = q(\vct x|y) \quad \forall \;x \in \cX,\; y\in \cY.
	$$ 
	\item[A.2] 
    For every $y\in\cY$ with $q(y)>0$ we require
    $p(y)>0$.\footnote{Assumes the absolute continuity of the (hidden)
      target label's distribution with respect to the source's, i.e.\ 
      $dq(y)/dp(y)$ exists.}
	\item[A.3] Access to a black box predictor $f: \cX\rightarrow
          \cY$ where the expected confusion matrix $\mat C_p(f)$
          is invertible.
	$$\mat C_P(f) := p(f(\vct x),y) \in\R^{|\cY|\times |\cY|}$$
\end{enumerate*}
We now comment on the assumptions. 
A.1 corresponds to anti-causal learning.
This assumption is strong but reasonable in many practical situations, 
including medical diagnosis, where diseases cause symptoms. 
It also applies when classifiers are trained on non-representative class distributions:
Note that while visions systems are commonly trained with balanced classes \cite{deng2009imagenet}, 
the true class distribution for real tasks is rarely uniform.
%
%
%

%
%

Assumption A.2 addresses identifiability,
requiring that the target label distribution's support be a subset of training distribution's.
For discrete $\cY$, this simply means 
that the training data should contain examples from every class.

%
%
Assumption A.3 requires 
that the expected predictor outputs for each class be linearly independent.
This assumption holds in the typical case
where the classifier predicts class $y_i$ 
more often given images actually belong to $y_i$
than given images from any other class $y_j$.
In practice, $f$ could be a neural network, 
a boosted decision-tree or any other classifier 
trained on a holdout training data set. 
We can verify at training time 
that the empirical estimated normalized confusion matrix is invertible. 
Assumption A.3 generalizes naturally to  soft-classifiers, 
where $f$ outputs a probability distribution 
supported on $\cY$. 
Thus \algo{} can be applied 
even when the confusion matrix is degenerate.


We wish to estimate 
$w(y):= q(y)/p(y)$ for every $y\in\cY$ 
with training data, unlabeled test data 
and a predictor $f$. 
This estimate enables DA techniques 
under the importance-weighted ERM framework, 
which solves
 $\min  \sum_{i=1}^n w_i \ell(y_i,\vct x_i)$, using
 $w_i  = q(\vct x_i,y_i)/p(\vct x_i,y_i)$.
Under the label shift assumption, 
the importance weight $w_i = q(y_i)/ p(y_i)$. 
This task isn't straightforward 
because we don't observe samples from $q(y)$. 
\vspace{-3px}

\section{Main results}
\label{sec:theory}

We now derive the main results for estimating $w(y)$ and
$q(y)$. Assumption A.1 has the following implication:
\begin{lemma}
\label{lem:matching_confusion}
Denote by $\hat{y}  = f(\vct x)$ the output of a 
fixed function $f: \cX \rightarrow \cY$. 
If Assumption A.1 holds, then 
$$
q(\hat{y} | y)  =  p(\hat{y}| y).
$$
\end{lemma}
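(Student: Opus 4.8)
The plan is to exploit the fact that $\hy = f(\vct x)$ is a deterministic, measurable function of $\vct x$ alone, so the conditional law of $\hy$ given $y$ is completely pinned down by the conditional law of $\vct x$ given $y$. Fix any $i,j\in\cY$ with $p(y=i)>0$ (by A.2 this covers every $i$ for which the statement is meaningful under $P$), and let $f^{-1}(j):=\{\vct x\in\cX : f(\vct x)=j\}$ be the corresponding decision region, which is measurable since $f$ is a predictor and $\cY$ is discrete.

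First I would rewrite both conditionals as the mass that the conditional feature distributions place on $f^{-1}(j)$:
$$
p(\hy=j\mid y=i) \;=\; \P_P\big(\vct x\in f^{-1}(j)\,\big|\,y=i\big)\;=\;\int_{f^{-1}(j)} p(\vct x\mid y=i)\,d\vct x,
$$
and the identical identity with $q$ in place of $p$ throughout. This is simply the assertion that the law of $\hy$ conditioned on $y=i$ is the pushforward under $f$ of the law of $\vct x$ conditioned on $y=i$; in the discrete-$\vct x$ case one replaces the integral by a sum and nothing changes. Assumption A.1 states that the integrands $p(\vct x\mid y=i)$ and $q(\vct x\mid y=i)$ agree for every $\vct x$, so the two integrals over $f^{-1}(j)$ are equal, giving $p(\hy=j\mid y=i)=q(\hy=j\mid y=i)$. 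Since $i$ and $j$ were arbitrary this is exactly the claimed equality of the conditional distributions $q(\hy\mid y)=p(\hy\mid y)$, equivalently equality of the $\hy$-given-$y$ blocks from which the confusion matrices $\mat C_P(f)$ and $\mat C_Q(f)$ are assembled (before rescaling by $p(y)$ resp.\ $q(y)$).

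There is essentially no hard step: the only care needed is measure-theoretic bookkeeping — confirming $f^{-1}(j)$ is measurable and phrasing the argument so it applies uniformly whether $p(\vct x\mid y)$ denotes a density or a mass function — which is why I would state it in pushforward/change-of-variables language rather than committing to explicit integrals. It is worth noting that the proof uses \emph{only} A.1; A.2 enters solely to ensure the conditioning event $\{y=i\}$ has positive probability so that $p(\hy\mid y)$ is well defined, and A.3 plays no role here.
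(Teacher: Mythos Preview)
Your proof is correct and follows essentially the same idea as the paper: since $\hy=f(\vct x)$ depends on $y$ only through $\vct x$, the conditional law of $\hy$ given $y$ is the pushforward under $f$ of the conditional law of $\vct x$ given $y$, and A.1 makes those pushforwards coincide. The paper phrases this via the law of total probability and the conditional independence $\hy\independent y\mid \vct x$, whereas you write it as an integral over the decision region $f^{-1}(j)$; these are the same computation in different notation.
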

The proof is simple: 
recall that $\hat{y}$ depends on $y$ 
only via $\vct x$. 
By A.1, 
$p(\vct x|y) = q(\vct x|y)$ and thus $q(\hat{y} | y)  =  p(\hat{y}| y)$.

Next, combine the law of total probability and Lemma~\ref{lem:matching_confusion} and we arrive at
\begin{align}
 q(\hat{y}) = & \sum_{y\in\cY} q(\hat{y} | y) q(y) \nonumber \\ 
  = & \sum_{y\in\cY} p(\hat{y} | y) q(y) = \sum_{y\in\cY} p(\hat{y},y)
      \frac{q(y)}{p(y)}.
 \label{eq:main_idea}
\end{align}
We estimate $p(\hat{y}|y)$ and $p(\hat{y},y)$ using $f$ and 
data from source distribution $P$,
and $q(\hat{y})$ with \emph{unlabeled} test data 
drawn from target distribution $Q$. 
This leads to a novel method-of-moments approach 
for consistent estimation
of the shifted label distribution $q(y)$ and the weights $w(y)$. 

Without loss of generality, we assume $\cY =\{1,2,...,k\}$. 
Denote by $\vct \nu_y, \vct \nu_{\hat{y}}, \hat{\vct \nu}_{\hat{y}}, \vct \mu_y, \vct \mu_{\hat{y}}, \hat{\vct \mu}_{\hat{y}},  
\vct w \in \R^k$ 
moments of $p$, $q$, and their plug-in estimates, defined via
\begin{align*}
  [\vct \nu_y]_i & = p(y=i)& 
  [\vct \mu_y]_i & = q(y=i) \\
  [\vct \nu_{\hat y}]_i & = p(f(\vct x)=i) & [\vct \mu_{\hat y}]_i & = q(f(\vct x)=i) \\ 
 [\hat{\vct \nu}_{\hat y}]_i & = \frac{\sum_j 
 \mathbbm{1}  \{f(\vct x_j)=i\}}{n} &
[\hat{\vct \mu}_{\hat y}]_i & = \frac{\sum_j 
  \mathbbm{1} \{f(\vct x_j')=i\}}{m} 
\end{align*}
and $[\vct w]_i = q(y=i)/p(y=i)$. 
Lastly define the covariance matrices 
$\mat C_{\hat{y},y}, \mat C_{\hat{y}|y}$ and $\hat{\mat C}_{\hat{y},y}$ in $\R^{k \times k}$ 
via 
\begin{align*}
  [\mat C_{\hat{y},y}]_{ij} & =  p(f(\vct x)=i, y=j)\\
  [\mat C_{\hat{y}|y}]_{ij} & =  p(f(\vct x)=i | y=j)\\
  [\hat{\mat C}_{\hat{y},y}]_{ij} & = \frac{1}{n} \sum_{l} \mathbbm{1} \{f(\vct x_l) = i \text{ and } y_l = j\} 
\end{align*}
We can now rewrite Equation \eqref{eq:main_idea} 
in matrix form:
\begin{align*}
\vct \mu_{\hat{y}} =  {\mat C}_{\hat{y} | y}\vct \mu_y  = \mat C_{\hat{y},y} \vct w
\end{align*}
Using plug-in maximum likelihood estimates of the above quantities yields the estimators
$$\hat{\vct w}  =  \hat{\mat C}_{\hat{y},y}^{-1} 
\hat{\vct \mu}_{\hat{y}}
\text{ and }
\hat{\vct \mu}_{y} =  
\diag(\hat{\vct \nu}_y) \hat{\vct w},
$$
where $\hat{\vct \nu}_y$ 
is the plug-in estimator of $\vct \nu_y$. 

Next, we establish that the estimators are consistent.
\begin{proposition}[Consistency]\label{prop:consistency}
	If Assumption A.1, A.2, A.3 are true, then as $ n,m\rightarrow \infty$,
	$\hat{\vct w} \overset{\text{a.s.}}{\longrightarrow} \vct w$ and  
	$\hat{\vct \mu}_{y}  \overset{\text{a.s.}}{\longrightarrow}  \vct \mu_y.$
\end{proposition}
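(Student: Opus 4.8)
The plan is to combine the strong law of large numbers (SLLN) with the continuous mapping theorem. First I would note that every ``hat'' quantity is an empirical average of i.i.d.\ bounded (indeed $\{0,1\}$-valued) random variables: for each fixed pair $(i,j)$ the entry $[\hat{\mat C}_{\hat{y},y}]_{ij}$ is the mean of the $n$ i.i.d.\ terms $\mathbbm{1}\{f(\vct x_l)=i,\ y_l=j\}$ with $(\vct x_l,y_l)\sim P$; each $[\hat{\vct \mu}_{\hat y}]_i$ is the mean of the $m$ i.i.d.\ terms $\mathbbm{1}\{f(\vct x'_j)=i\}$ with $\vct x'_j\sim Q$; and each $[\hat{\vct \nu}_y]_i$ is an analogous average over the source sample. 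Hence, by the SLLN applied entrywise and, since $k$ is finite, jointly, we get $\hat{\mat C}_{\hat{y},y}\to\mat C_{\hat{y},y}$, $\hat{\vct \mu}_{\hat y}\to\vct \mu_{\hat y}$, and $\hat{\vct \nu}_y\to\vct \nu_y$ almost surely as $n,m\to\infty$. Intersecting these finitely many probability-one events leaves a single probability-one event on which all of these convergences hold simultaneously.

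Second, I would record what the assumptions buy us at the population level. Since $\mat C_{\hat{y},y}=\mat C_{\hat{y}|y}\diag(\vct \nu_y)$, Assumption A.3 (invertibility of $\mat C_P(f)=\mat C_{\hat{y},y}$) forces no column of $\mat C_{\hat{y},y}$ to vanish, hence $p(y=i)>0$ for every $i$, so $\diag(\vct \nu_y)$ is invertible and so is $\mat C_{\hat{y}|y}$. Combining the law of total probability with Lemma~\ref{lem:matching_confusion} as in \eqref{eq:main_idea} gives the population identity $\vct \mu_{\hat y}=\mat C_{\hat{y},y}\vct w$, which can therefore be solved uniquely as $\vct w=\mat C_{\hat{y},y}^{-1}\vct \mu_{\hat y}$, with $\vct \mu_y=\diag(\vct \nu_y)\vct w$; Assumption A.1 enters only through Lemma~\ref{lem:matching_confusion}, and A.2 is subsumed here since $p(y=i)>0$ for all $i$.

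Third, I would invoke continuity to transfer the convergence through the estimator formulas. The set of invertible $k\times k$ matrices is open and matrix inversion is continuous on it; therefore on the probability-one event above, $\hat{\mat C}_{\hat{y},y}$ is invertible for all sufficiently large $n$ (so $\hat{\vct w}$ is eventually well defined) and $\hat{\mat C}_{\hat{y},y}^{-1}\to\mat C_{\hat{y},y}^{-1}$. Applying the continuous mapping theorem to the maps $(\mat A,\vct b)\mapsto\mat A^{-1}\vct b$ and $(\vct c,\vct v)\mapsto\diag(\vct c)\vct v$ then yields $\hat{\vct w}=\hat{\mat C}_{\hat{y},y}^{-1}\hat{\vct \mu}_{\hat y}\to\mat C_{\hat{y},y}^{-1}\vct \mu_{\hat y}=\vct w$ and $\hat{\vct \mu}_y=\diag(\hat{\vct \nu}_y)\hat{\vct w}\to\diag(\vct \nu_y)\vct w=\vct \mu_y$, both almost surely, which is the claim.

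The only genuine subtlety — the main obstacle — is handling the matrix inversion: one must argue that $\hat{\mat C}_{\hat{y},y}$ is invertible not merely in the limit but eventually along the sequence, so that $\hat{\vct w}$ is even defined; this is precisely what openness of the invertible matrices together with a.s.\ convergence of $\hat{\mat C}_{\hat{y},y}$ provides. One should also be mildly careful that the two samples furnish independent sources of randomness and that $n,m\to\infty$ independently, but since we argue via a.s.\ convergence and a finite intersection of full-probability events, this is routine bookkeeping.
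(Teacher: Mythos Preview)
Your proposal is correct and follows the same high-level strategy as the paper: apply the SLLN entrywise to the empirical averages $\hat{\mat C}_{\hat{y},y}$, $\hat{\vct \mu}_{\hat y}$, $\hat{\vct \nu}_y$, and then push the almost-sure convergence through the estimator formulas via the continuous mapping theorem.

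The one substantive place you differ is in how you secure the eventual invertibility of $\hat{\mat C}_{\hat{y},y}$, which both you and the paper identify as the only real obstacle. The paper bounds $\P(\hat{\mat C}_{\hat{y},y}\text{ not invertible})$ by $2e^{-n\delta^2/(4k^3)}$ via Hoeffding (with $\delta=\sigma_{\min}(\mat C_{\hat{y},y})$), sums this over $n$, and invokes the First Borel--Cantelli Lemma to conclude that noninvertibility occurs only finitely often almost surely. You instead observe that the set of invertible $k\times k$ matrices is open, so on the full-probability event where $\hat{\mat C}_{\hat{y},y}\to\mat C_{\hat{y},y}$ the estimates must eventually lie in a neighborhood of the invertible limit and hence be invertible themselves. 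Your route is shorter and requires no concentration inequality; the paper's route is more quantitative (the exponential tail foreshadows the finite-sample analysis in Theorem~\ref{thm:estimating_ratios}), but for the bare consistency statement your openness argument is cleaner and fully sufficient. Your side remark that A.3 already forces $p(y=i)>0$ for every $i$, so that A.2 is automatic here, is also correct and is a nice observation the paper does not make explicit.
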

The proof (see Appendix~\ref{app:proofs}) 
uses the First Borel-Cantelli Lemma 
to show that the probability 
that the entire sequence of empirical confusion matrices with data size $n+1,...,\infty$ 
are \emph{simultaneously} invertible converges to $1$,  thereby enabling us to use 
the continuous mapping theorem 
after applying the strong law of large numbers 
to each component.

We now address our estimators' convergence rates.
\begin{theorem}[Error bounds]\label{thm:estimating_ratios}
Assume that A.3 holds robustly.
Let $\sigma_{\min}$ be the smallest eigenvalue 
of ${\mat C}_{\hat{y},y}$.
 There exists a constant $C > 0$ 
 such that for all $n>80\log(n) \sigma_{\min}^{-2}$, 
 with probability at least $1-3kn^{-10} - 2km^{-10} $ we have 
 \begin{align}
   \label{eq:ratiobound}
   \|\hat{\vct w} - \vct w\|_2^2 & \leq
   \frac{C}{\sigma_{\min}^2}\left(\frac{\|\vct w\|^2\log
                                   n}{n}+\frac{k\log m}{m}\right) 
   \\
   \label{eq:mu_y_errorbound}
   \|\hat{\vct \mu}_y - \vct \mu_y\|^2  & \leq  
   \frac{C\|\vct w\|^2\log n}{n}  + \|\vct \nu_y\|_{\infty}^2 
                                          \|\hat{\vct w} - \vct w\|_2^2
 \end{align}
\end{theorem}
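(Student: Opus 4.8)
The plan is to isolate one exact perturbation identity and then feed it three routine concentration estimates. Since $\vct\mu_{\hat y} = \mat C_{\hat y,y}\vct w$ and $\hat{\vct w} = \hat{\mat C}_{\hat y,y}^{-1}\hat{\vct\mu}_{\hat y}$, the resolvent identity $\hat{\mat C}_{\hat y,y}^{-1} - \mat C_{\hat y,y}^{-1} = -\hat{\mat C}_{\hat y,y}^{-1}(\hat{\mat C}_{\hat y,y}-\mat C_{\hat y,y})\mat C_{\hat y,y}^{-1}$ gives, after a line of algebra,
\begin{align*}
\hat{\vct w}-\vct w \;=\; \hat{\mat C}_{\hat y,y}^{-1}\Big[(\hat{\vct\mu}_{\hat y}-\vct\mu_{\hat y}) - (\hat{\mat C}_{\hat y,y}-\mat C_{\hat y,y})\,\vct w\Big],
\end{align*}
hence $\|\hat{\vct w}-\vct w\| \le \|\hat{\mat C}_{\hat y,y}^{-1}\|_2\big(\|\hat{\vct\mu}_{\hat y}-\vct\mu_{\hat y}\| + \|\hat{\mat C}_{\hat y,y}-\mat C_{\hat y,y}\|_2\,\|\vct w\|\big)$. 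So it suffices to (i) show $\hat{\mat C}_{\hat y,y}$ is well-conditioned, (ii) bound the spectral deviation $\|\hat{\mat C}_{\hat y,y}-\mat C_{\hat y,y}\|_2$, and (iii) bound $\|\hat{\vct\mu}_{\hat y}-\vct\mu_{\hat y}\|$.

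For (ii), write $\hat{\mat C}_{\hat y,y} = \frac1n\sum_{l=1}^n \vct e_{f(\vct x_l)}\vct e_{y_l}^\top$ as an average of i.i.d.\ rank-one indicator matrices with mean $\mat C_{\hat y,y}$, each of spectral norm $1$, and with matrix variance statistic at most $1$ (since $\E[\vct e_{f(\vct x)}\vct e_y^\top\vct e_y\vct e_{f(\vct x)}^\top]=\diag(\vct\nu_{\hat y})$ and the transposed product equals $\diag(\vct\nu_y)$, both of norm $\le 1$). A matrix Bernstein inequality then yields, with probability at least $1-2kn^{-10}$, $\|\hat{\mat C}_{\hat y,y}-\mat C_{\hat y,y}\|_2 \le c\sqrt{\log n /n}$ for an absolute constant $c$. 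For (i), the hypothesis $n>80\log(n)\sigma_{\min}^{-2}$ is (up to the absolute constant) exactly the regime in which this deviation falls below $\sigma_{\min}/2$; on that event Weyl's inequality forces the smallest singular value of $\hat{\mat C}_{\hat y,y}$ to exceed $\sigma_{\min}/2$, so $\hat{\mat C}_{\hat y,y}$ is invertible and $\|\hat{\mat C}_{\hat y,y}^{-1}\|_2 \le 2/\sigma_{\min}$. For (iii), $\hat{\vct\mu}_{\hat y}$ is a normalized histogram of $m$ i.i.d.\ test-set predictions, so a coordinatewise Hoeffding/Bernstein bound with a union bound over the $k$ classes gives $\|\hat{\vct\mu}_{\hat y}-\vct\mu_{\hat y}\|^2 \le C'\,k\log m/m$ with probability at least $1-2km^{-10}$. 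Substituting (i)--(iii) into the displayed inequality and squaring gives \eqref{eq:ratiobound}, absorbing all absolute constants into $C$.

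For \eqref{eq:mu_y_errorbound}, use $\hat{\vct\mu}_y=\diag(\hat{\vct\nu}_y)\hat{\vct w}$, $\vct\mu_y=\diag(\vct\nu_y)\vct w$ and the decomposition $\hat{\vct\mu}_y-\vct\mu_y = \diag(\vct\nu_y)(\hat{\vct w}-\vct w) + \diag(\hat{\vct\nu}_y-\vct\nu_y)\,\vct w + \diag(\hat{\vct\nu}_y-\vct\nu_y)(\hat{\vct w}-\vct w)$. The first term has norm at most $\|\vct\nu_y\|_\infty\|\hat{\vct w}-\vct w\|$, which is the second summand of \eqref{eq:mu_y_errorbound}. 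For the second term, $\|\diag(\hat{\vct\nu}_y-\vct\nu_y)\vct w\|^2 = \sum_i(\hat\nu_{y,i}-\nu_{y,i})^2 w_i^2$, and a per-coordinate Bernstein bound gives $(\hat\nu_{y,i}-\nu_{y,i})^2 \le c''\,\nu_{y,i}\log n/n$ on a high-probability event (the source of the extra $kn^{-10}$ in the stated failure probability); since $\sum_i\nu_{y,i}w_i^2 = \sum_i p(y=i)w_i^2 \le \|\vct w\|^2$, this term is $O(\|\vct w\|^2\log n/n)$. The remaining cross term has norm $\le \|\hat{\vct\nu}_y-\vct\nu_y\|_\infty\|\hat{\vct w}-\vct w\| = O(\sqrt{\log n/n})\,\|\hat{\vct w}-\vct w\|$, of strictly lower order on the same event, and is absorbed by enlarging $C$. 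Collecting the failure events ($2kn^{-10}$ for $\hat{\mat C}_{\hat y,y}$, $kn^{-10}$ for $\hat{\vct\nu}_y$, $2km^{-10}$ for $\hat{\vct\mu}_{\hat y}$) gives the probability $1-3kn^{-10}-2km^{-10}$.

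I expect the crux to be the interplay of steps (i) and (ii): producing a clean spectral-norm concentration for the empirical confusion matrix with the correct $\sigma_{\min}^{-2}$ sample-size threshold, and making sure the single good event on which $\hat{\mat C}_{\hat y,y}$ is well-conditioned is reused for every downstream estimate, so that the perturbation identity may legitimately be chained through $\hat{\mat C}_{\hat y,y}^{-1}$. Everything else is bookkeeping: noting that the $n$-dependence enters only through $\hat{\mat C}_{\hat y,y}$ and $\hat{\vct\nu}_y$ while the $m$-dependence enters only through $\hat{\vct\mu}_{\hat y}$, and invoking the elementary identity $\sum_i p(y=i)w_i^2 \le \|\vct w\|_2^2$ that converts the reweighted histogram deviation into the $\|\vct w\|^2$ factor.
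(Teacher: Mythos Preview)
Your proof is correct and follows essentially the same skeleton as the paper's: matrix Bernstein for $\hat{\mat C}_{\hat y,y}-\mat C_{\hat y,y}$, Hoeffding with a union bound for $\hat{\vct\mu}_{\hat y}$ and $\hat{\vct\nu}_y$, the sample-size condition to guarantee $\sigma_{\min}(\hat{\mat C}_{\hat y,y})\ge\sigma_{\min}/2$, and the same three-term expansion for $\hat{\vct\mu}_y-\vct\mu_y$. The one notable difference is algebraic: the paper routes the perturbation through the Woodbury identity $\hat{\mat C}_{\hat y,y}^{-1}=\mat C_{\hat y,y}^{-1}+\mat C_{\hat y,y}^{-1}[E_1^{-1}+\mat C_{\hat y,y}^{-1}]^{-1}\mat C_{\hat y,y}^{-1}$ and then bounds $\|[E_1^{-1}+\mat C_{\hat y,y}^{-1}]^{-1}\|\le 2\|E_1\|$, whereas you use the resolvent identity $\hat{\mat C}_{\hat y,y}^{-1}-\mat C_{\hat y,y}^{-1}=-\hat{\mat C}_{\hat y,y}^{-1}(\hat{\mat C}_{\hat y,y}-\mat C_{\hat y,y})\mat C_{\hat y,y}^{-1}$ directly; your route is slightly cleaner (it never needs $E_1$ invertible) and lands on the same bound. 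Your per-coordinate Bernstein treatment of $\diag(\hat{\vct\nu}_y-\vct\nu_y)\vct w$ is a refinement of the paper's cruder $\|E_0\|_\infty\|\vct w\|$ step but yields the same $O(\|\vct w\|^2\log n/n)$ term, so the final statements coincide.
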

The bounds give practical insights 
(explored more in Section \ref{sec:discussion}). 
In \eqref{eq:ratiobound},
the square error depends on the sample size and is proportional to $1/n$ (or $1/m$).
There is also a $\|\vct w\|^2$ term 
that reflects 
how different the source and target distributions are.
In addition, $\sigma_{\min}$ 
reflects the quality of the given classifier $f$. 
For example, if $f$ is a perfect classifier, 
then $\sigma_{\min}  = \min_{y\in\cY} p(y)$. 
If $f$ cannot distinguish between certain classes at all, 
then ${\mat C}_{\hat{y},y}$
will be low-rank, $\sigma_{\min} = 0$, 
and the technique is invalid, as expected. 

We now parse the error bound
of $\hat{\vct \mu}_y$ in \eqref{eq:mu_y_errorbound}. 
The first term $\|\vct w\|^2/n$ 
is required even if we observe 
the importance weight $\vct w$ exactly. 
The second term captures 
the additional error due to the fact 
that we estimate $\vct w$ 
with predictor $f$. 
Note that $\|\vct \nu_y\|_{\infty}^2 \leq 1$ 
and can be as small as $1/k^2$ when $p(y)$ 
is uniform.
Note that when $f$
correctly classifies each class 
with the same probability, 
e.g. $0.5$, 
then $\|\vct \nu_y\|^2/\sigma_{\min}^2$ 
is a constant and the bound cannot be improved. 
\vspace{-11px}
\begin{proof}[Proof of Theorem~\ref{thm:estimating_ratios}]
	Assumption A.2 ensures that $\vct w < \infty$. 
	\begin{align*}
	\hat{\vct w}  &=  \hat {\mat C}_{\hat{y},y}^{-1} \hat{\vct \mu}_{\hat{y}} =  ({\mat C}_{\hat{y},y}+E_1)^{-1}(\vct \mu_{\hat{y}} + E_2)\\
    &=  \vct w + [({\mat C}_{\hat{y},y}+E_1)^{-1} - {\mat C}_{\hat{y},y}^{-1}] \vct \mu_{\hat{y}} + ({\mat C}_{\hat{y},y}+E_1)^{-1}E_2
	\end{align*}
By completing the square and Cauchy-Schwartz inequality,
	\begin{align*}
	\| \hat{\vct w} - \vct w \|^2  &\leq 2 \vct \mu_{\hat{y}}^T[({\mat C}_{\hat{y},y}+E_1)^{-1}\\
&- {\mat C}_{\hat{y},y}^{-1}]^T[({\mat C}_{\hat{y},y}+E_1)^{-1} - {\mat C}_{\hat{y},y}^{-1}] \vct \mu_{\hat{y}} \\
&+ 2 E_2 [({\mat C}_{\hat{y},y}+E_1)^{-1}]^T({\mat C}_{\hat{y},y}+E_1)^{-1}E_2.
	\end{align*}
	By Woodbury matrix identity, we get that 
	$$\hat {\mat C}_{\hat{y},y}^{-1}   =  {\mat C}_{\hat{y},y}^{-1}    +  {\mat C}_{\hat{y},y}^{-1}  [ E_1^{-1} + {\mat C}_{\hat{y},y}^{-1} ]^{-1} {\mat C}_{\hat{y}, y}^{-1}.$$
Substitute into the above inequality and use \eqref{eq:main_idea} we get
\begin{align}
      \label{eq:square-error-expansion}
  \|\hat{\vct w} - \vct w \|^2  \leq & 2 \vct w \left\{[ E_1^{-1} +
                                  {\mat C}_{\hat{y},y}^{-1}
                                  ]^{-1}\right\}^T [{\mat
                                  C}_{\hat{y},y}^{-1} ]^T \times \\
  \nonumber
  & {\mat C}_{\hat{y},y}^{-1}[ E_1^{-1} + {\mat C}_{\hat{y},y}^{-1} ]^{-1} \vct w \\
  \nonumber
    &+ 2 E_2^T [({\mat C}_{\hat{y},y}+E_1)^{-1}]^T({\mat C}_{\hat{y},y}+E_1)^{-1}E_2
\end{align}
We now provide a high probability bound 
on the Euclidean norm of $E_2$, 
the operator norm of $E_1$, 
which will give us an operator norm bound of 
$[ E_1^{-1} + {\mat C}_{\hat{y},y}^{-1} ]^{-1}$ 
and $({\mat C}_{\hat{y},y}+E_1)^{-1}$ 
under our assumption on $n$, 
and these will yield a high probability bound 
on the square estimation error.
	
\textbf{Operator norm of $E_1$. }
Note that $\hat {\mat C}_{\hat{y},y} = \frac{1}{n} \sum_{i=1}^n  \vct e_{f(x_i)} \vct e_{y_i}^T $, 
where $\vct e_y$ is the standard basis with $1$ at the index of $y\in \cY$ and $0$ elsewhere.
Clearly, $\E \vct e_{f(x_i)} \vct e_{y_i}^T  = {\mat C}_{\hat{y},y} $.  Denote $\mat Z_i := \vct e_{f(x_i)} \vct e_{y_i}^T  - {\mat C}_{\hat{y},y} $.  
	Check that $\|\mat Z_i\|_2 \leq \|\mat Z_i\|_F \leq \|\mat Z_i\|_{1,1} \leq 2$, $\max\{\|\E [\mat Z_i \mat Z_i^T]\|,\|\E [\mat Z_i^T \mat Z_i]\|\} \leq 1$, by matrix Bernstein inequality (Lemma~\ref{lem:matrixbernstein}) 
we have for all $t\geq 0$:
	$$
	\P(\|E_1\| \geq t/n)  \leq 2k e^{-\frac{t^2}{n + 2t/3} }.
	$$
Take $t = \sqrt{20n\log n}$ and use the assumption that $n\geq 4\log n / 9$ (which holds under our assumption on $n$ since $\sigma_{\min} < 1$). Then with probability at least $1-2kn^{-10}$
	$$\|E_1\|\leq \sqrt{\frac{20\log n}{n}}.$$
	
Using the assumption on $n$, we have $\|E_1\| \leq \sigma_{\min}/2$
	$$
	\| [ E_1^{-1} + {\mat C}_{\hat{y},y}^{-1} ]^{-1} \| \leq 2\|E_1\|\leq \frac{2\sqrt{20\log n}}{\sqrt{n}}.
	$$
Also, we have 
	$
	\|({\mat C}_{\hat{y},y}+E_1)^{-1}\|  \leq \frac{2}{\sigma_{\min}}
	$.
	
\textbf{Euclidean norm of $E_2$. }
Note that $[E_2]_l = \frac{1}{m}\sum_{i=1}^m \mathbf{1}(f(\vct x_i')=l) - q(f(\vct x_i')=l)$. By the standard Hoeffding's inequality and union bound argument, we have that with probability larger than $1-2km^{-10}$
\begin{equation*}
\|E_2\| = \|\vct \mu_{\hat{y}}  - \hat{\vct \mu}_{\hat{y} }\|_2  \leq \frac{\sqrt{10k \log m}}{\sqrt{m}}	
\end{equation*}
\vspace{-10px}
	
Substitute into Equation \ref{eq:square-error-expansion}, we get
	\begin{equation}\label{eq:errbound_deriv}
	\| 	\hat{\vct w} - 	\vct w \|^2 \leq \frac{80\log n}{\sigma_{\min}^2 n} \|\vct w\|^2  +  \frac{80k\log m}{\sigma_{\min}^2 m },
	\end{equation}
	which holds with probability $1-2kn^{-10} - 2km^{-10}$.
We now turn to $\hat{\vct \mu}_y$. Recall that $\hat{\vct \mu}_y  =  \diag(\hat{\vct \nu}_y) \hat{\vct w}$. Let the estimation error of $\hat{\vct \nu}_y$ be $E_0.$
	\begin{align*}
	\hat{\vct \mu}_y  =&  \vct \mu_y  +  \diag(E_0)\vct w  +  \diag(\vct \nu_y)(\hat{\vct w} -\vct w) \\  &+    \diag(E_0)(\hat{\vct w} -\vct w).
    \vspace{-10px}
	\end{align*}
By Hoeffding's inequality $\|E_0\|_\infty \leq \sqrt{\frac{20\log n}{n}}$ with probability larger than $1-kn^{-10}$. Combining with 
\eqref{eq:errbound_deriv} yields
	$$
	\|\hat{\vct \mu}_y  - \vct \mu_y \|^2  \leq \frac{20\|\vct w\|^2\log n}{n} +  \|\vct \nu_y\|_\infty^2	\| 	\hat{\vct w} - 	\vct w \|^2 + O(\frac{1}{n^2})
	$$
		which holds with probability $1-3kn^{-10} - 2km^{-10}$.
\end{proof}
\vspace{-10px}

\vspace{-3px}

\section{Application of the results}
\label{sec:application}
\begin{figure*}[tb]
  \begin{subfigure}[t]{0.32\textwidth}
    \includegraphics[width=.97\textwidth]{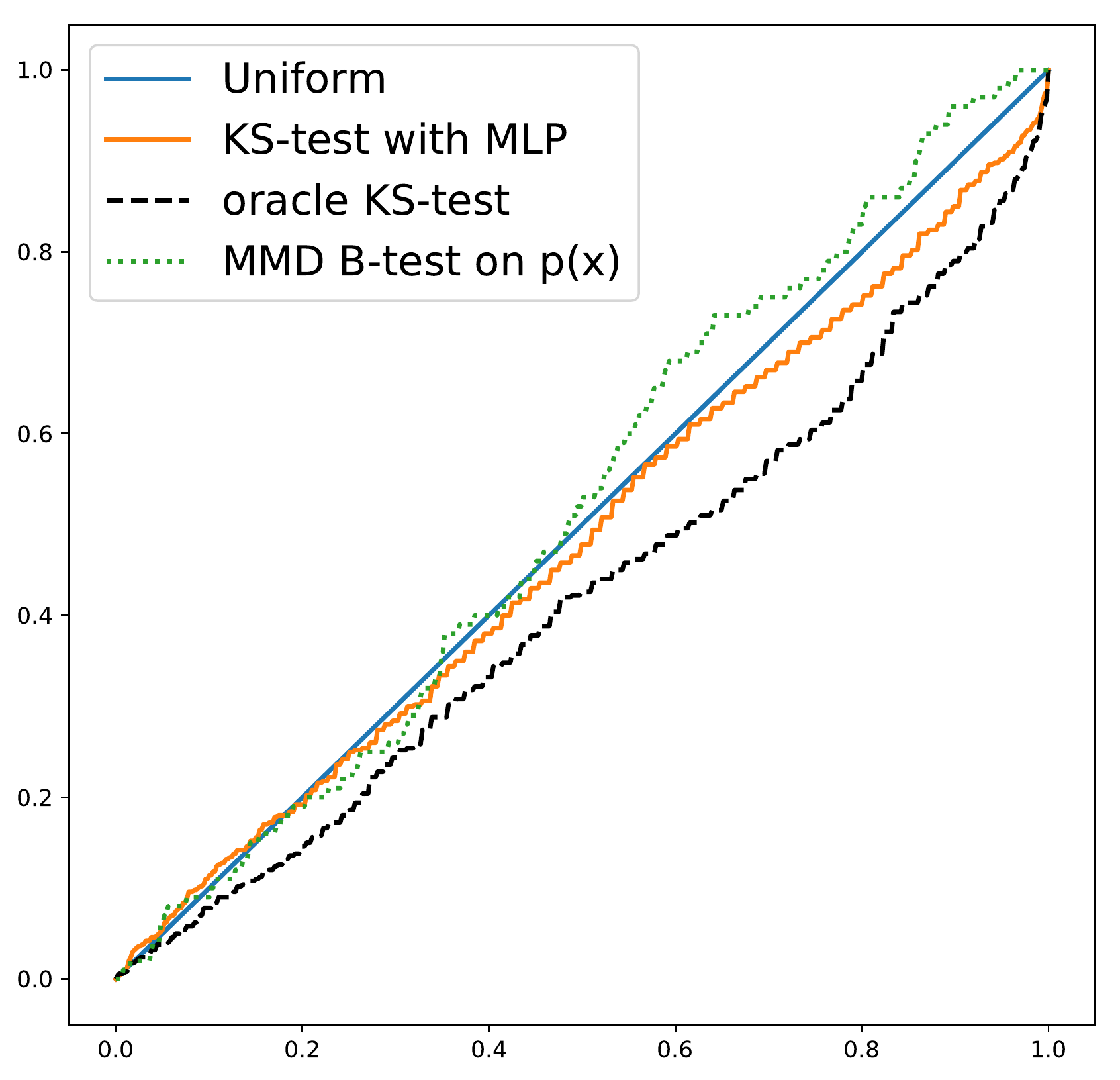}
    \caption{CDF of $p$-value at $\delta=0$ \label{fig:alpha_level_control}}
  \end{subfigure}
  \begin{subfigure}[t]{0.32\textwidth}
    \includegraphics[width=.97\textwidth]{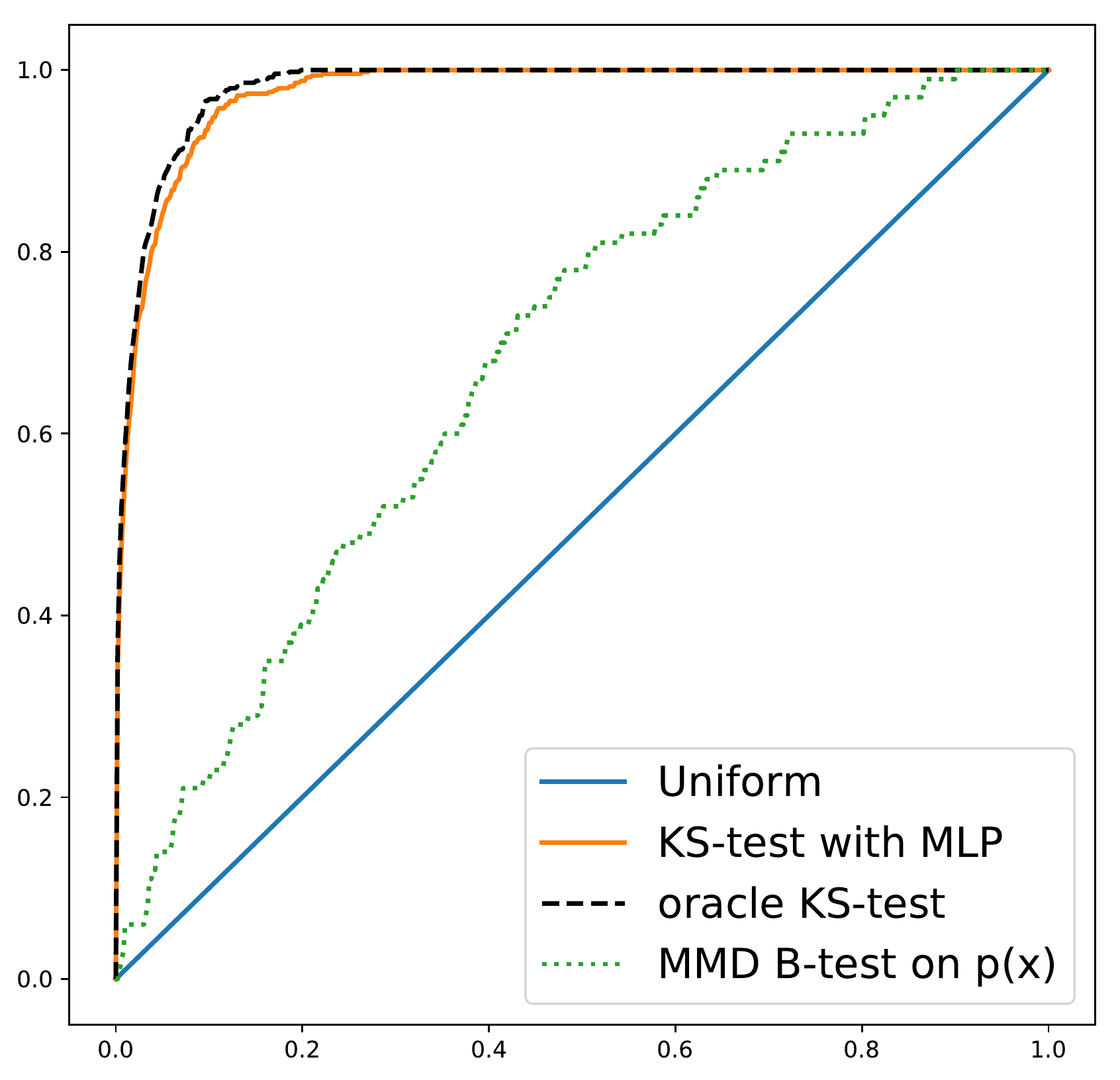}
    \caption{CDF of $p$-value at $\delta = 0.6$ \label{fig:power_at_0.6}}
  \end{subfigure}
  \begin{subfigure}[t]{0.32\textwidth}
    \includegraphics[width=.97\textwidth]{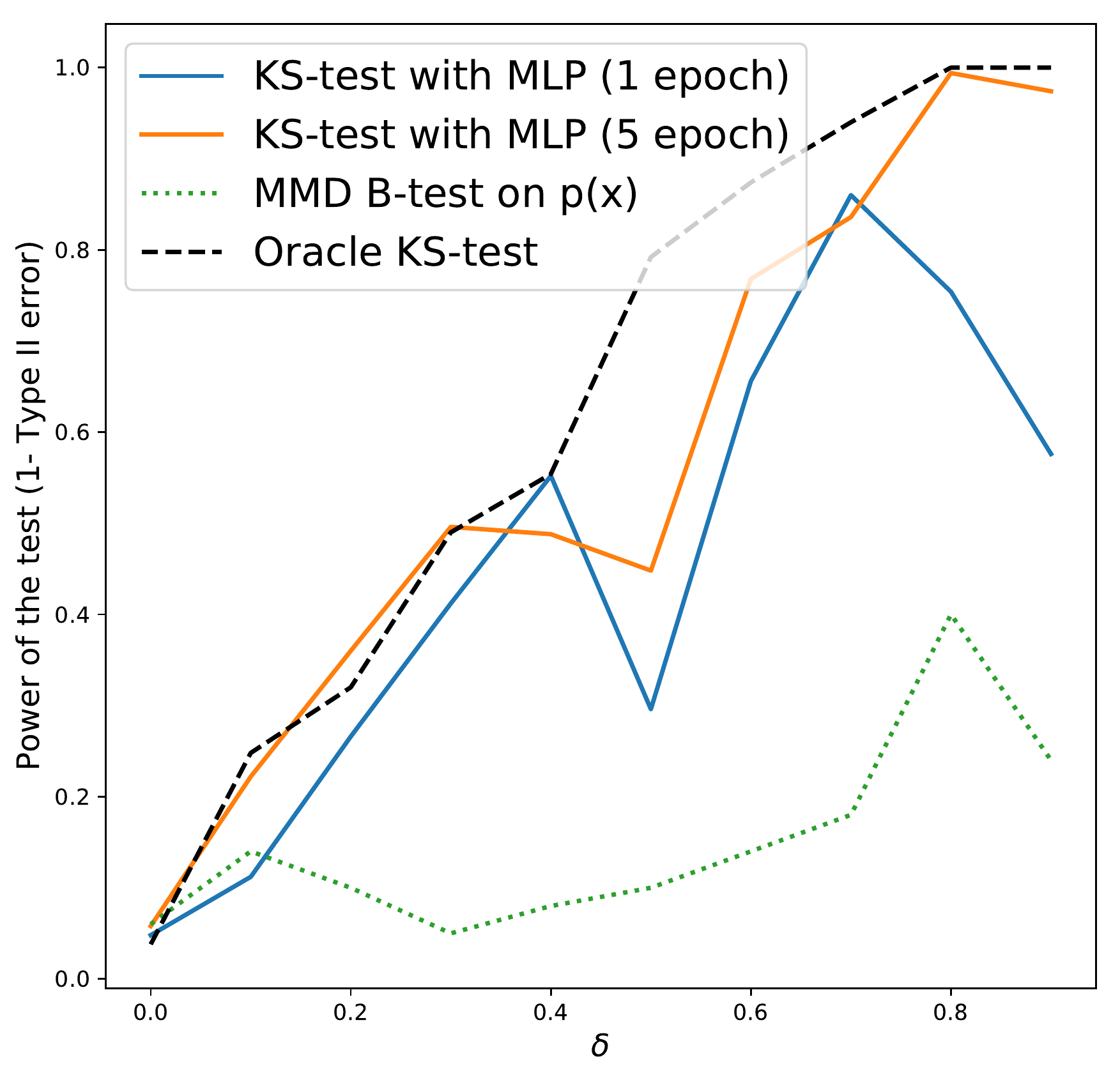}
    \caption{Power at $\alpha=0.05$ \label{fig:power_vs_oracle}}
  \end{subfigure}
  \caption{Label-shift detection on MNIST. 
		Pane \ref{fig:alpha_level_control} illustrates that Type I error is correctly controlled absent label shift. 
		Pane \ref{fig:power_at_0.6} illustrates high power 
		under mild label-shift. 
		Pane \ref{fig:power_vs_oracle} 
		shows increased power 
		for better classifiers. 
		We compare to
        kernel two-sample tests \citep{zaremba2013b}  
		and an (infeasible) oracle 
        two sample test 
        that directly tests 
        $p(y) = q(y)$ 
		with samples from each. 
		The proposed test beats directly testing 
        in high-dimensions 
		and nearly matches the oracle.
	\label{fig:exp_hypothesis_testing}}
    \vspace{-10px}
\end{figure*}

\subsection{Black Box Shift Detection (BBSD)}\label{sec:detection}
Formally, detection can be cast as a hypothesis testing problem 
where the null hypothesis is
$\mathbf{H}_0:  q(y) = p(y)$
and the alternative hypothesis is that
$
\mathbf{H}_1:  q(y) \neq p(y).
$
Recall that we observe neither $q(y)$ 
nor any samples from it. 
However, we do observe unlabeled data from the target distribution 
and our predictor $f$.

\begin{proposition}[Detecting label-shift]
Under Assumption A.1, A.2 and for each classifier $f$ satisfying A.3
  we have that $q(y) = p(y)$ if and only if $p(\hat{y}) =
  q(\hat{y})$.
\end{proposition}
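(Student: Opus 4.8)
The plan is to reduce the biconditional to a uniqueness statement for the linear system behind \eqref{eq:main_idea}. First I would rephrase everything in the vector notation already introduced: ``$p(\hat y)=q(\hat y)$'' is exactly $\vct\nu_{\hat y}=\vct\mu_{\hat y}$, while ``$p(y)=q(y)$'' is exactly $\vct\nu_y=\vct\mu_y$, which under Assumption A.2 is equivalent to $\vct w=\mathbf{1}$ (A.2 makes $\vct w$ well defined and finite; any coordinate $i$ with $p(y=i)=0$ also has $q(y=i)=0$ by A.2, so it contributes nothing to either side). I would also record the companion identity $\vct\nu_{\hat y}=\mat C_{\hat y,y}\mathbf{1}$, which is immediate from the law of total probability since $[\mat C_{\hat y,y}\mathbf{1}]_i=\sum_j p(f(\vct x)=i,y=j)=p(f(\vct x)=i)$. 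Together with the matrix form of \eqref{eq:main_idea}, namely $\vct\mu_{\hat y}=\mat C_{\hat y,y}\vct w$, these two identities are the whole engine of the proof.

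For the forward direction, if $q(y)=p(y)$ then $\vct w=\mathbf{1}$, so $\vct\mu_{\hat y}=\mat C_{\hat y,y}\mathbf{1}=\vct\nu_{\hat y}$, i.e.\ $q(\hat y)=p(\hat y)$; note this direction uses only A.1 (through \eqref{eq:main_idea}) and A.2, not A.3. For the reverse direction, suppose $\vct\mu_{\hat y}=\vct\nu_{\hat y}$. Subtracting the two matrix identities gives $\mat C_{\hat y,y}(\vct w-\mathbf{1})=\vct\mu_{\hat y}-\vct\nu_{\hat y}=\mathbf{0}$. Here I invoke A.3: $\mat C_P(f)=\mat C_{\hat y,y}$ is invertible, so its kernel is trivial, forcing $\vct w=\mathbf{1}$, i.e.\ $q(y=i)=p(y=i)$ for every $i\in\cY$.

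I do not anticipate a genuine obstacle: the argument is just the observation that \eqref{eq:main_idea} is a linear system whose ``unshifted'' solution is the all-ones vector, plus the uniqueness of solutions guaranteed by A.3. The only point needing a little care is the bookkeeping around supports — confirming that $\vct w$ is well defined (A.2) and that ``$\vct w=\mathbf{1}$'' is coordinatewise equivalent to ``$q(y)=p(y)$'', including any coordinate where $p(y=i)$ might a priori vanish. One can even note in passing that invertibility of $\mat C_{\hat y,y}$ already forces $p(y=i)>0$ for all $i$ (otherwise the $i$-th column would be zero), so under A.3 the vector $\vct w$ is automatically finite and the reverse direction is in fact unconditional on A.2.
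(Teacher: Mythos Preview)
Your proposal is correct and follows essentially the same route as the paper: both reduce the biconditional to the linear identity $\vct\mu_{\hat y}=\mat C_{\hat y,y}\vct w$ together with $\vct\nu_{\hat y}=\mat C_{\hat y,y}\mathbf{1}$, and then invoke invertibility of $\mat C_{\hat y,y}$ (A.3) for the nontrivial direction. Your write-up is more explicit about the support bookkeeping around A.2 than the paper's two-line sketch, but the underlying argument is the same.
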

\begin{proof}
  Plug  $P$ and $Q$ into \eqref{eq:main_idea} and apply
  Lemma~\ref{lem:matching_confusion} with assumption A.1.  
The result follows directly from our analysis in the proof of Proposition~\ref{prop:consistency} 
that shows $p(\hat{y},y)$ is invertible 
under the assumptions A.2 and A.3.
\end{proof}
Thus, under weak assumptions, 
we can test $\mathbf{H}_0$ 
by running two-sample tests 
on readily available samples from $p(\hat{y})$ and $q(\hat{y})$.
Examples include the Kolmogorov-Smirnoff test, 
Anderson-Darling or the Maximum Mean Discrepancy.
In all tests, asymptotic distributions are known 
and we can almost perfectly control the  Type I error. 
The power of the test ($1$-Type II error) 
depends on the classifier's performance on distribution $P$,
thereby allowing us to leverage recent progress in deep learning 
to attack the classic problem 
of detecting non-stationarity in the data distribution. 

One could also test whether $p(x) = q(x)$. 
Under the label-shift assumption 
this is implied by $q(y) = p(y)$. 
The advantage of testing the distribution of 
$f(\vct x)$ instead of $\vct x$ 
is that we only need to deal 
with a one-dimensional distribution.
Per theory and experiments in \cite{ramdas2015decreasing} 
two-sample tests in high dimensions
are exponentially harder. 

One surprising byproduct is that 
we can sometimes use this approach 
to detect covariate-shift, 
concept-shift, and more general 
forms of nonstationarity. 
\begin{proposition}[Detecting general  nonstationarity]
For any fixed measurable $f:\cX \rightarrow \cY$
	$$P=Q \implies p(x) = q(x) \implies p(\hat{y}) = q(\hat{y}).$$
\end{proposition}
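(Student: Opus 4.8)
The plan is to verify the two implications separately; both are elementary measure-theoretic facts that require only the stated measurability of $f$ and none of the assumptions A.1--A.3.

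First I would establish $P=Q \implies p(\vct x) = q(\vct x)$. Since $P$ and $Q$ are probability measures on the product space $\cX\times\cY$, equality of the joint laws forces equality of every marginal: for any measurable $A\subseteq\cX$ we have $\P_P(\vct x\in A) = P(A\times\cY) = Q(A\times\cY) = \P_Q(\vct x\in A)$, so the marginal laws of $\vct x$ under $P$ and $Q$ coincide, i.e. $p(\vct x)=q(\vct x)$.

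Second, I would establish $p(\vct x) = q(\vct x) \implies p(\hat y) = q(\hat y)$. The key observation is that $\hat y = f(\vct x)$ is a deterministic measurable function of $\vct x$ alone, so the law of $\hat y$ under either distribution is the pushforward of the corresponding law of $\vct x$ through $f$. For each $i\in\cY$, measurability of $f$ guarantees that $f^{-1}(\{i\})\subseteq\cX$ is measurable, and
$$p(\hat y = i) = \P_P\bigl(\vct x\in f^{-1}(\{i\})\bigr) = \P_Q\bigl(\vct x\in f^{-1}(\{i\})\bigr) = q(\hat y = i),$$
where the middle equality uses the equality of the $\vct x$-marginals just proved. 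Chaining the two implications gives the claim.

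This argument has no real obstacle -- it is purely definitional -- so I would not belabor it. The one point worth stating explicitly is that the pushforward step genuinely needs $f$ to be measurable (so the preimages are events); in particular, unlike the earlier propositions, nothing about label shift, absolute continuity, or invertibility of the confusion matrix is used, which is exactly what lets the statement cover covariate shift, concept shift, and arbitrary nonstationarity. I would also add a one-line remark that the converse implications fail in general -- distinct $P,Q$ can share an $\vct x$-marginal, and distinct $\vct x$-marginals can induce the same law on a coarse $f(\vct x)$ -- so the proposition only furnishes a \emph{sufficient} condition for a change in $p(\hat y)$ to be detectable, not a characterization.
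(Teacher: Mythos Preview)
Your proof is correct and follows the same idea as the paper: the paper dispatches the proposition in a single line, ``This follows directly from the measurability of $f$,'' and your argument simply unpacks that remark into the two pushforward steps (marginalization of the joint, then the law of $f(\vct x)$ as the pushforward through measurable $f$). Your added observations that A.1--A.3 are not needed and that the converse fails are accurate and align with the paper's surrounding discussion.
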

  This follows directly 
from the measurability of $f$. 

While the converse is not true in general,  $p(\hat{y}) = q(\hat{y})$ does imply that for every measurable $\cS\subset \cY$,
$$
 q(x\in f^{-1}( \cS) ) = p(x\in f^{-1}(\cS)).
$$
This suggests that testing $\hat{\mathbf{H}}_0:  p(\hat{y}) =
q(\hat{y})$ may help us to determine if there's sufficient statistical evidence 
that domain adaptation techniques are required.


\begin{figure*}[tb]
  \includegraphics[width=0.32\textwidth]{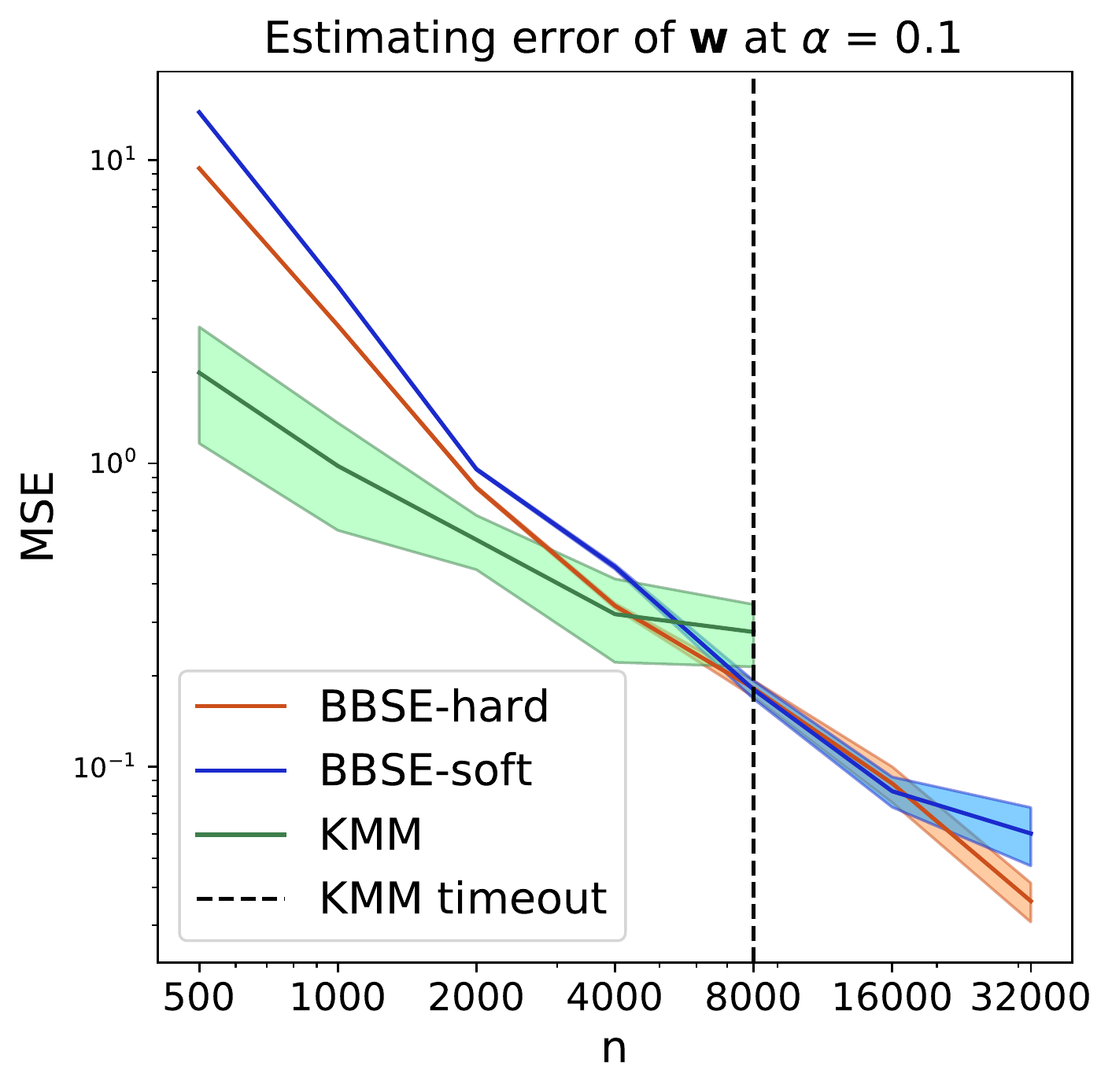}
  \hfill
  \includegraphics[width=0.32\textwidth]{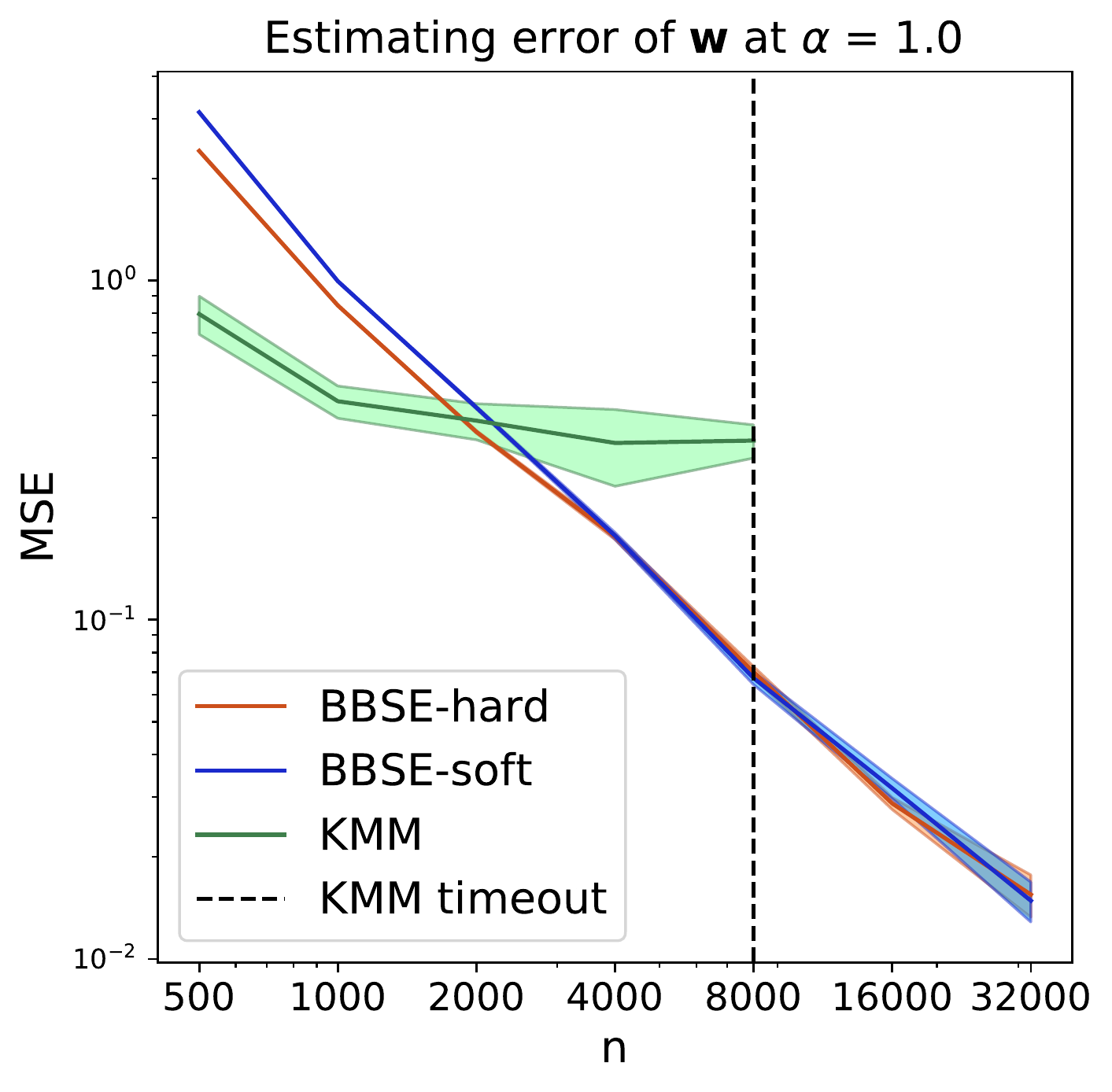}
  \hfill
  \includegraphics[width=0.32\textwidth]{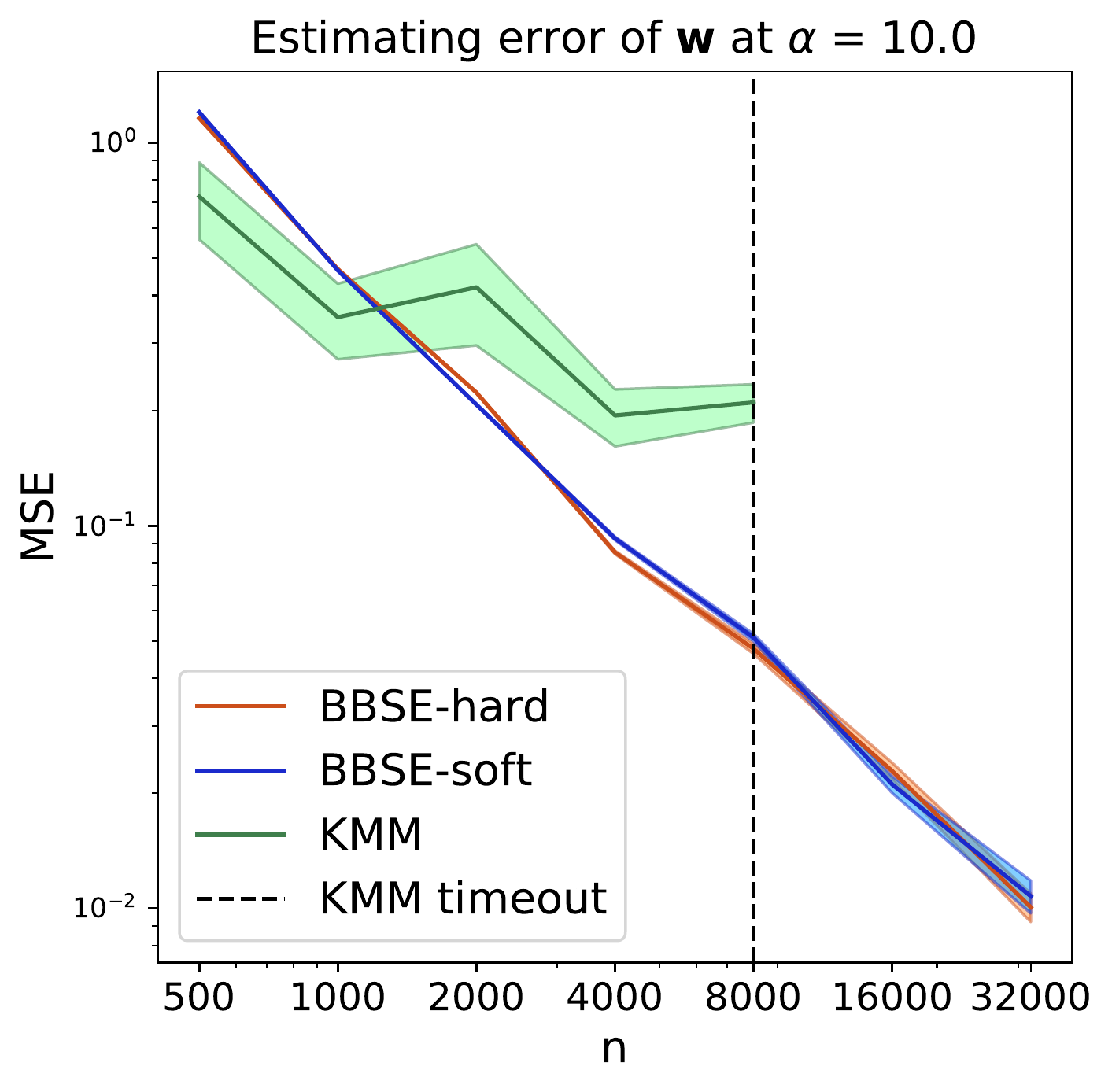}
  \\
  \includegraphics[width=0.32\textwidth]{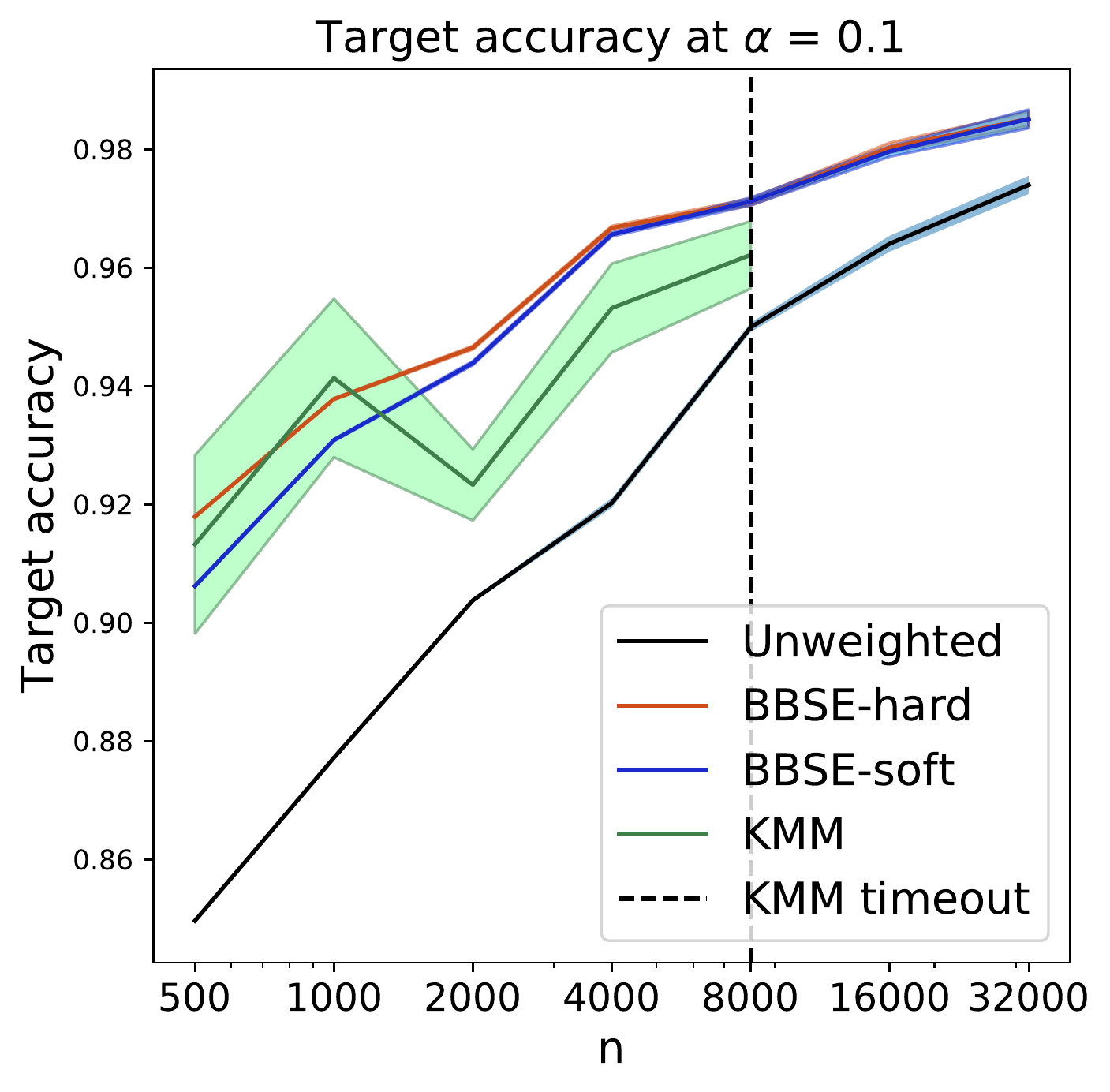}
  \hfill
  \includegraphics[width=0.32\textwidth]{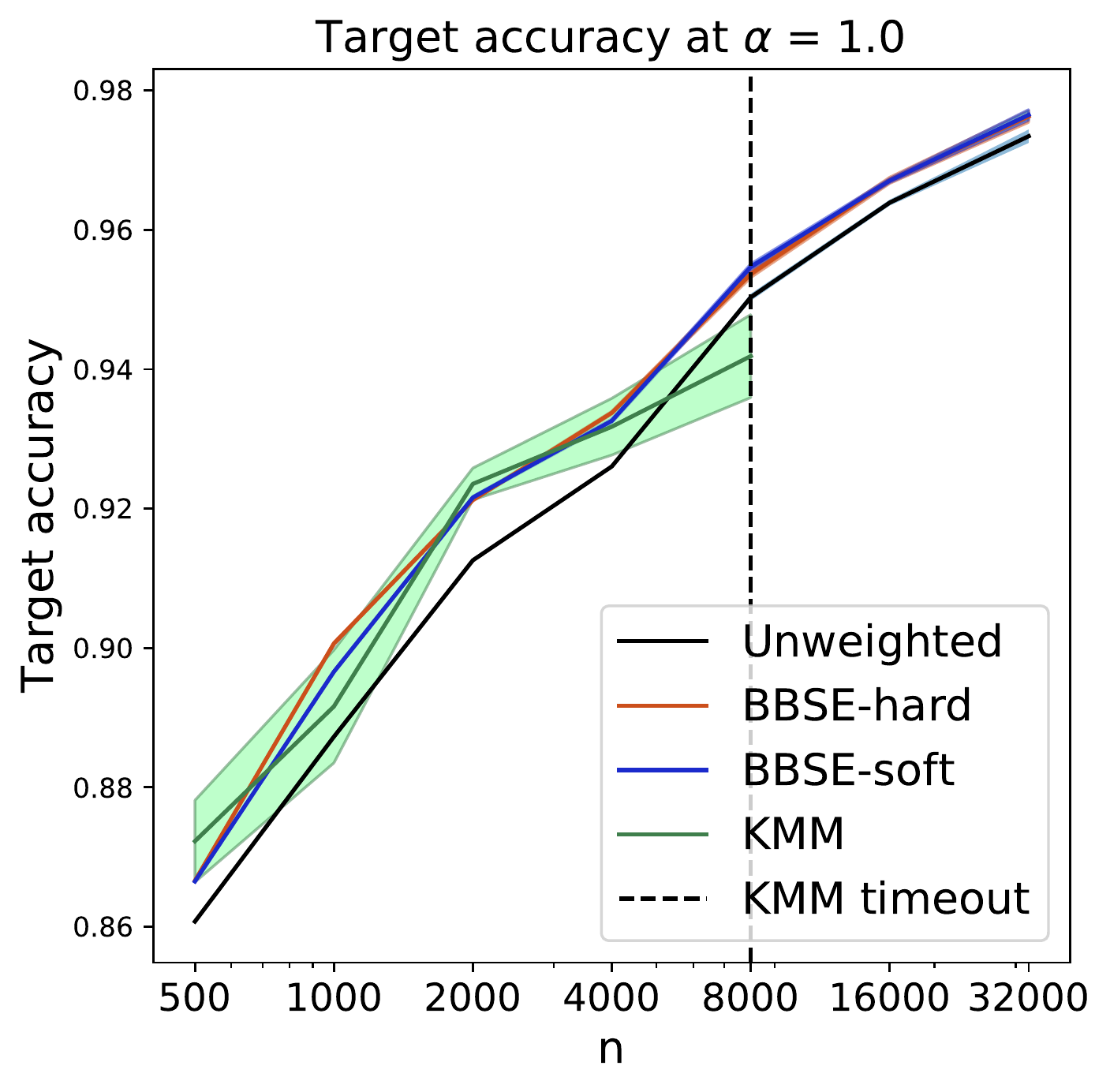}
  \hfill
  \includegraphics[width=0.32\textwidth]{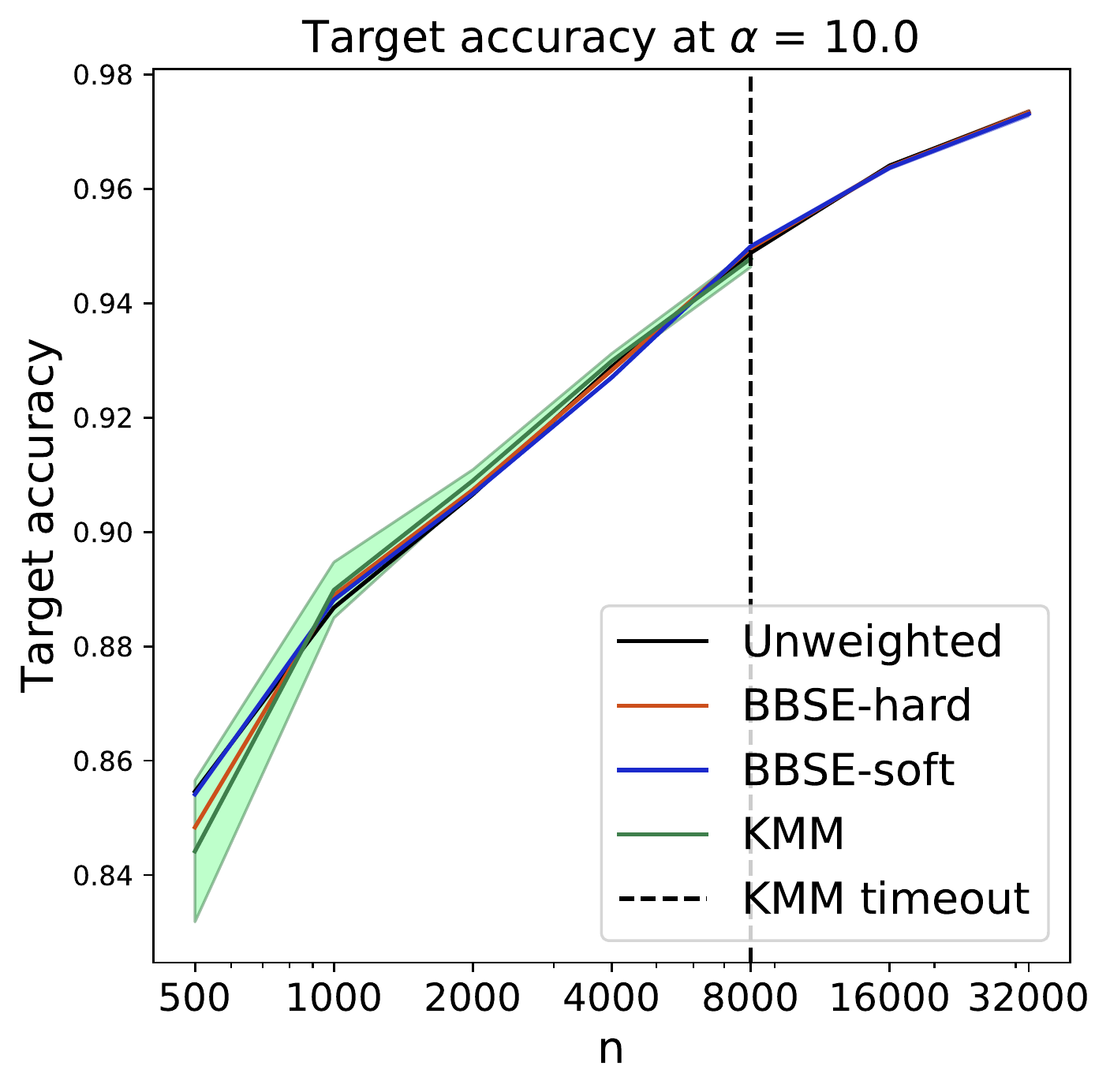}
  \caption{
    Estimation error (top row) 
    and correction accuracy (bottom row)
    vs dataset size on MNIST data 
    compared to KMM \citep{zhang2013domain}
    under Dirichlet shift (left to right) 
    with $\alpha = \{.1,1.0,10.0\}$ (smaller $\alpha$ means larger shift).
    \algo{} confidence interval on $20$ runs,
    KMM on $5$ runs due to computation; 
    $n=8000$ is largest feasible KMM experiment. 
    \label{fig:exp_increase_n_dirichlet}
  }
\end{figure*}

\begin{figure*}	
  \includegraphics[width=0.32\textwidth]{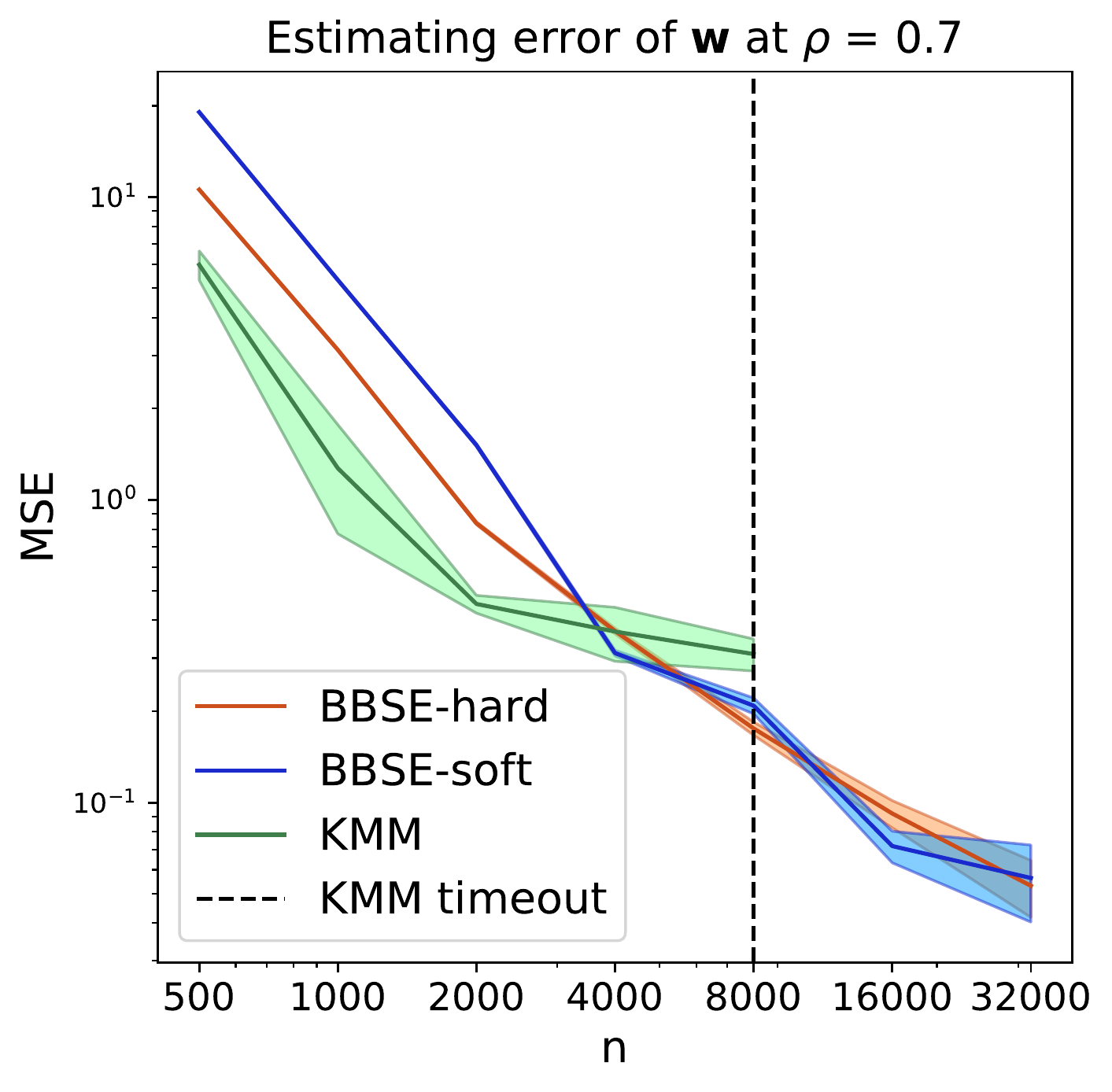}
  \hfill
  \includegraphics[width=0.32\textwidth]{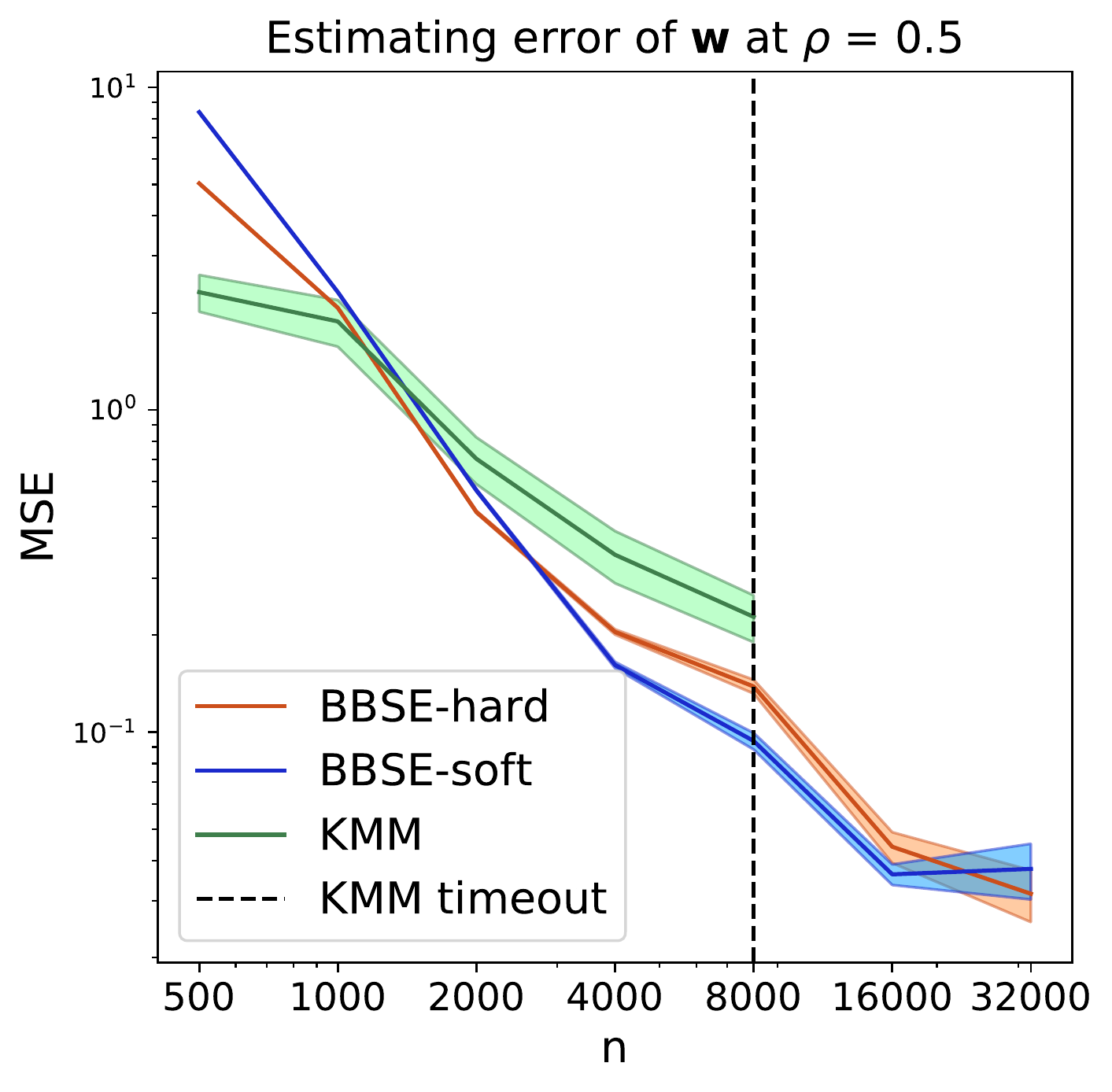}
  \hfill
  \includegraphics[width=0.32\textwidth]{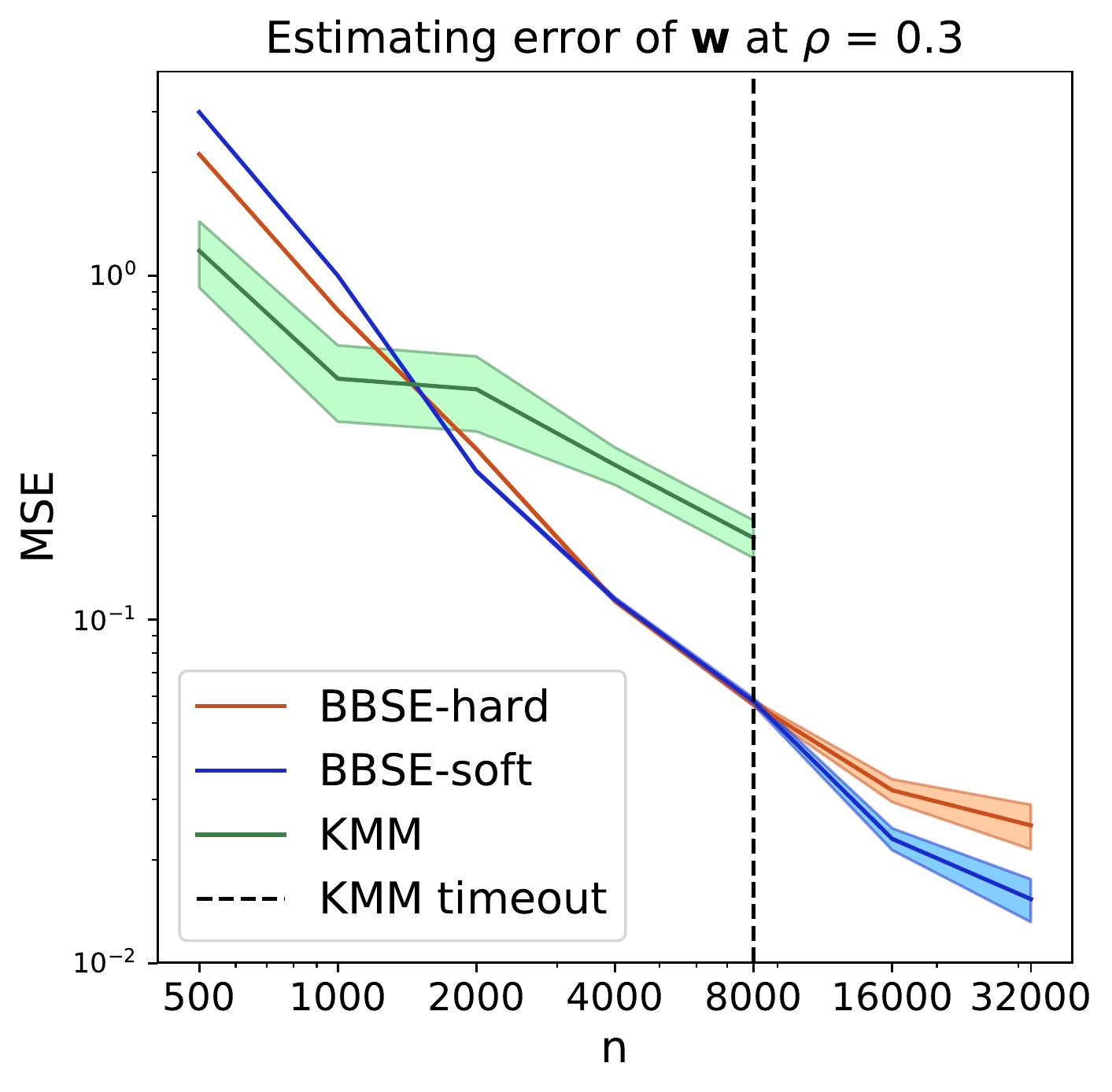}
  \\
  \includegraphics[width=0.32\textwidth]{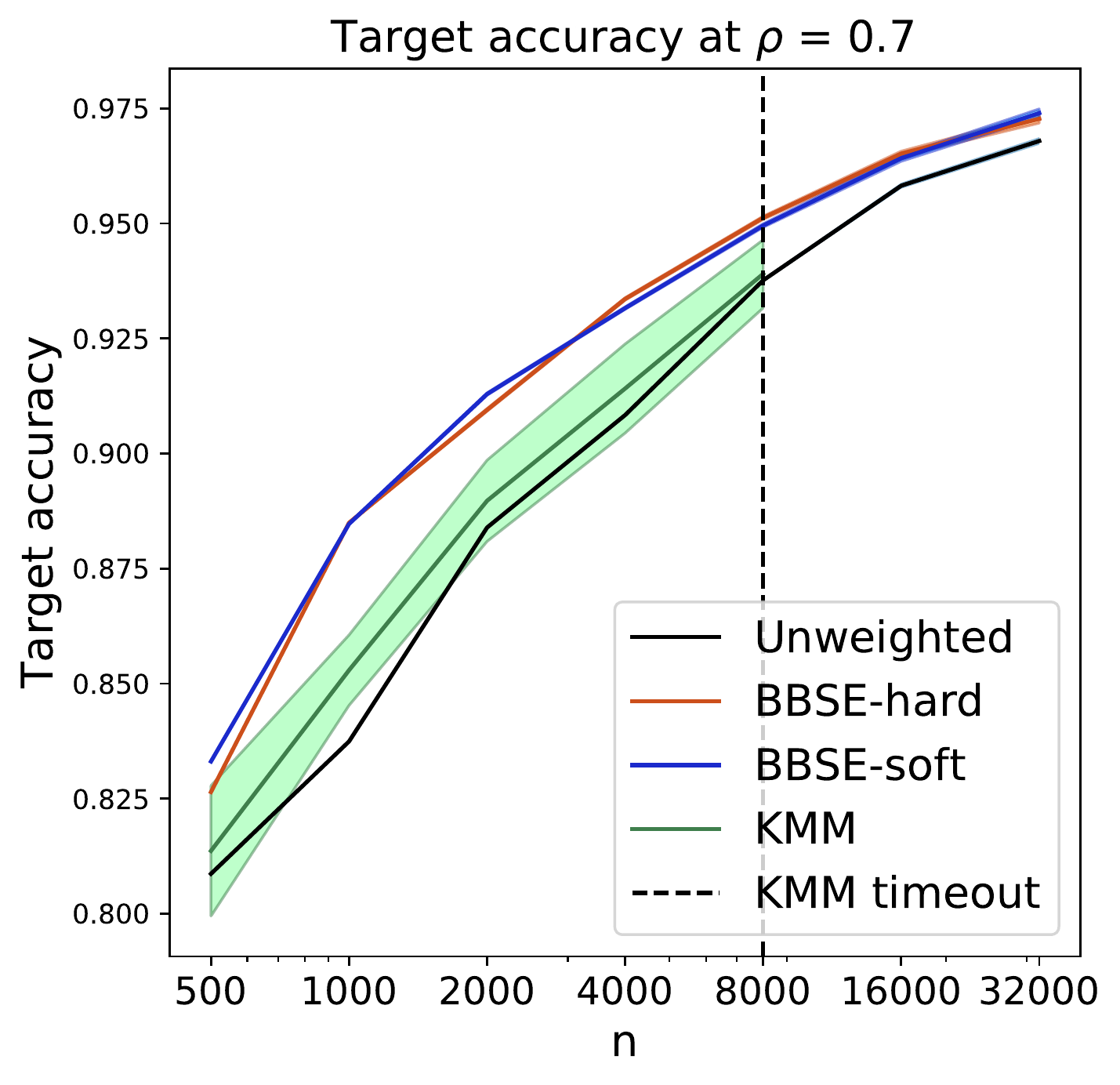}
  \hfill
  \includegraphics[width=0.32\textwidth]{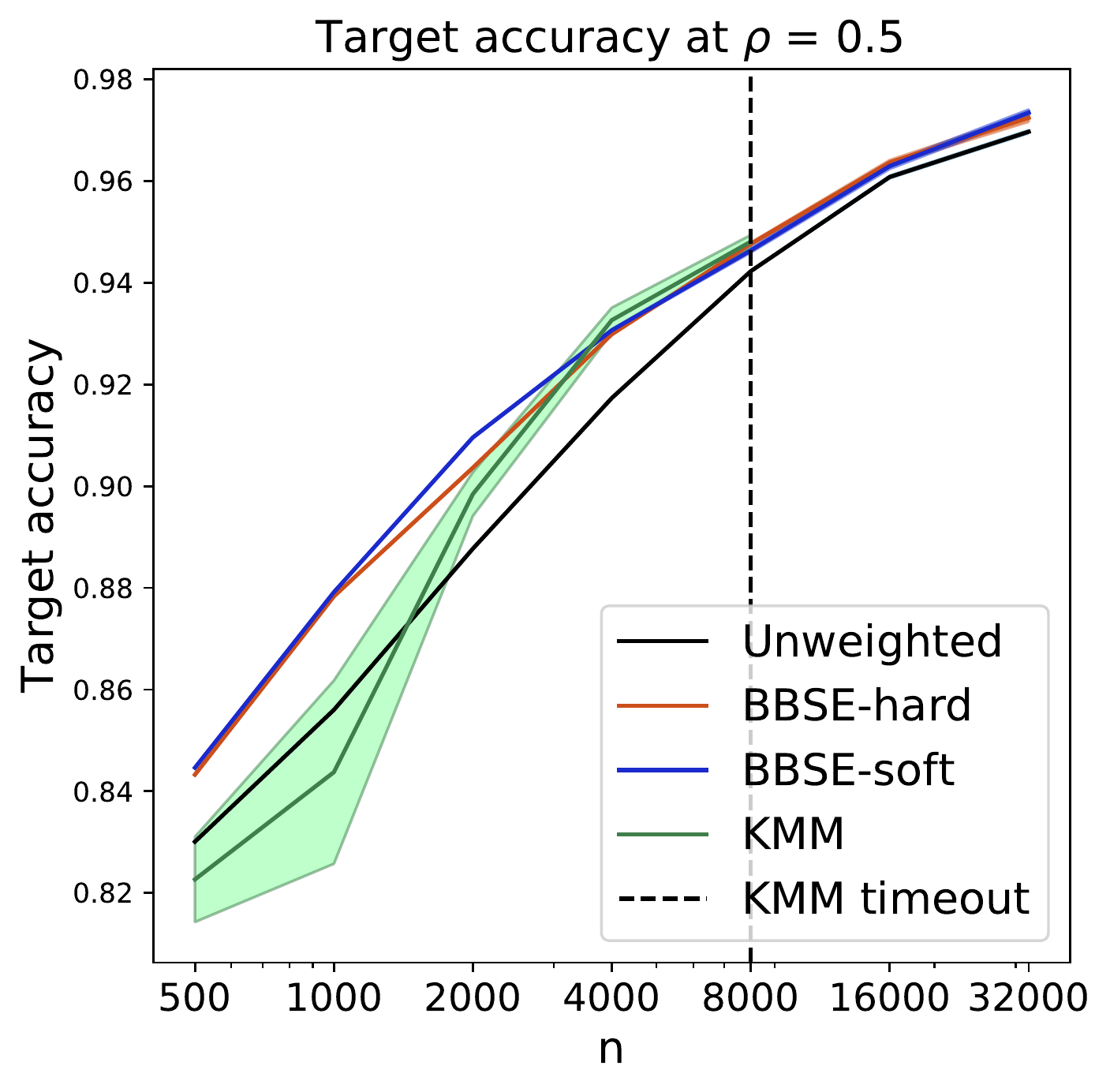}
  \hfill
  \includegraphics[width=0.32\textwidth]{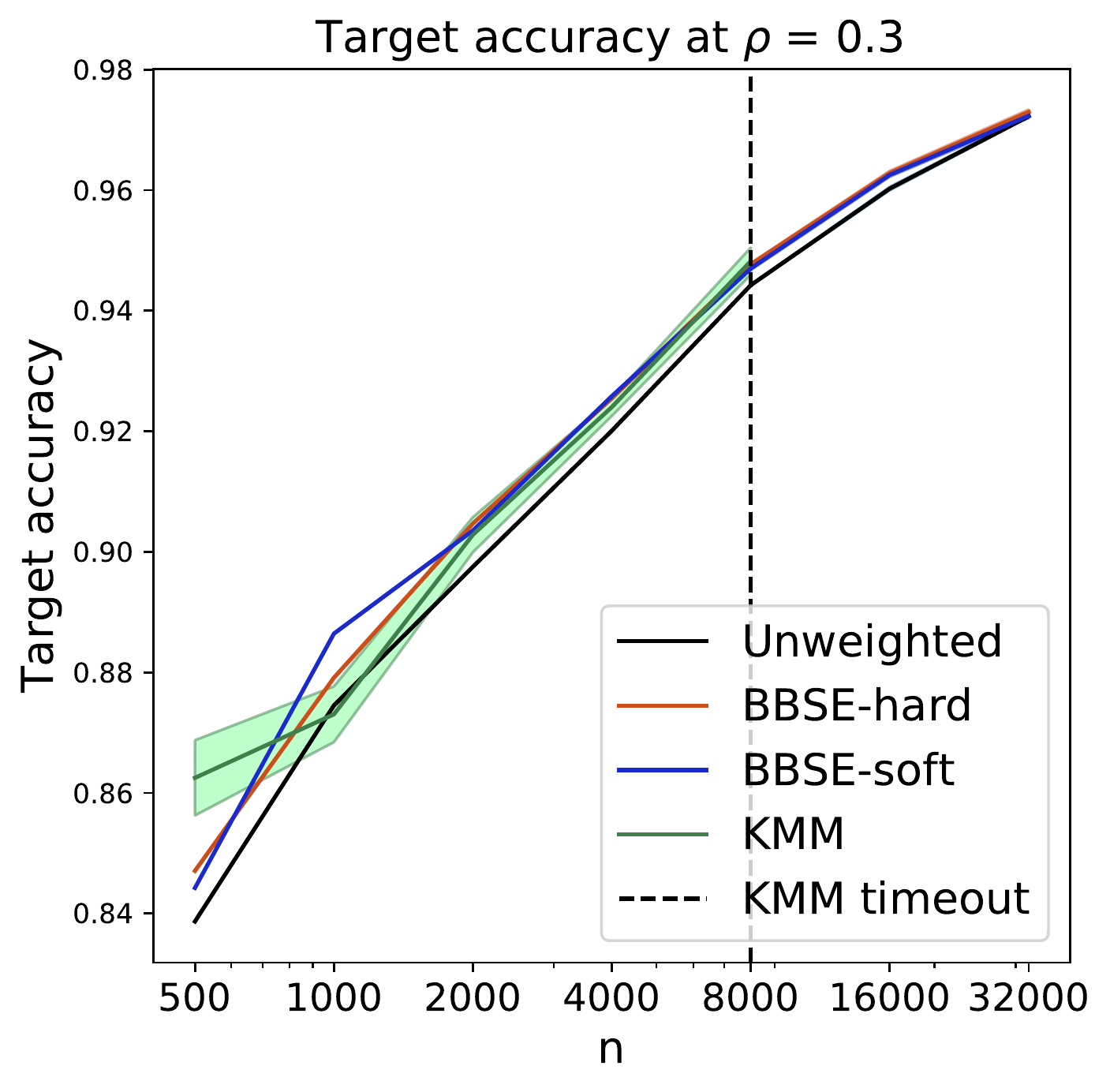}
  \caption{Label-shift estimation and correction on MNIST data with simulated tweak-one shift with parameter $\rho$.
	\label{fig:exp_increase_n_boostQ}}
\end{figure*}

%
\subsection{Black Box Shift Correction (BBSC)}\label{sec:correction}
Our estimator also points to a systematic method
of correcting for label-shift 
via importance-weighted ERM. 
Specifically, we propose the following algorithm:
\begin{algorithm} [h!]                   
	\caption{Domain adaptation via Label Shift}          
	\label{alg:labelshift-alg}                           
	\begin{algorithmic}                    
		\INPUT{ Samples from source distribution $X$, $\mathbf{y}$. Unlabeled data from target distribution $X'$. A class of classifiers $\cF$. Hyperparameter $0<\delta<1/k$.
		}
		\STATE{1. Randomly split the training data into two  $X_1,X_2 \in \R^{n/2\times d}$ and $\vct y_1, \vct y_2 \R^{n/2}$. }
		\STATE{2. Use $X_1,\vct y_1$ to train the classifier and obtain $f\in\cF$.}
		\STATE{3. On the hold-out data set $X_2,\vct y_2$, calculate  the confusion matrix $\hat{\mat C}_{\hat{y},y}$. If ,  }
		\IF{$\sigma_{\min}\left(\hat{\mat C}_{\hat{y},y}\right)\leq \delta$}
		\STATE{Set $\hat{\vct w} = \vct 1$.}
		\ELSE 
		\STATE{Estimate $\hat{\vct w} =  \hat{\mat C}_{\hat{y},y}^{-1} \hat{\vct \mu}_{\hat{y} }$ .}
		\ENDIF
		\STATE{4. Solve the importance weighted  ERM on the $X_1,\vct y_1$ with $\max(\hat{\vct w},\vct 0)$ and obtain $\tilde{f}$.}\ 
		\OUTPUT{ $\tilde{f}$}
	\end{algorithmic}
\end{algorithm}


Note that for classes 
that occur rarely in the test set, 
\algo{} may produce negative importance weights.
During ERM, a flipped sign would cause us 
to \emph{maximize} loss, which is unbounded above. 
Thus, we clip negative weights to $0$. 



%
%
%

Owing to its efficacy and generality, 
our approach can serve as a default tool
to deal with domain adaptation.  
It is one of the first things to try 
even when the label-shift assumption doesn't hold. 
By contrast, the heuristic method of using logistic-regression 
to construct importance weights \citep{bickel2009discriminative}  
lacks theoretical justification 
that the estimated weights are correct. 

Even in the simpler problem of average treatment effect (ATE) estimation,
it's known that using estimated propensity 
can lead to estimators with large variance \citep{kang2007demystifying}. 
The same issue applies in supervised learning.
We may prefer to live with the \emph{biased}  solution from the unweighted ERM 
rather than suffer high variance from 
an unbiased weighted ERM.
Our proposed approach offers 
a consistent low-variance estimator
under label shift.

\vspace{-2px}

\section{Experiments}\label{sec:exp}
\label{sec:experiments}
We experimentally demonstrate 
the power of \algo{} with real data and simulated label shift.  
We organize results into three categories --- 
shift detection \textbf{with BBSD}, 
weight estimation \textbf{with \algo},
and classifier correction \textbf{with BBSC}.
\textbf{BBSE-hard} denotes our method where $f$ yields classifications. In \textbf{BBSE-soft},
$f$ outputs probabilities.

\textbf{Label Shift Simulation }
To simulate distribution shift in our experiments, 
we adopt the following protocols:
First, we split the original data into train, validation, and test sets.
Then, given distributions $p(y)$ and $q(y)$,
we generate each set by sampling with replacement from the appropriate split.
In \textbf{knock-out shift}, we knock out a fraction $\delta$ of data points from a given class from training and validation sets.
In \textbf{tweak-one shift}, we assign a probability $\rho$ to one of the classes, the rest of the mass is spread evenly among the other classes. 
In \textbf{Dirichlet shift}, 
we draw $p(y)$ from a Dirichlet distribution with concentration parameter $\alpha$. 
With uniform $p(y)$, 
Dirichlet shift is bigger for smaller $\alpha$. 

\textbf{Label-shift detection } 
We conduct nonparametric two-sample tests 
as described in Section~\ref{sec:detection} 
using the MNIST handwritten digits data set. 
To simulate the label-shift, 
we randomly split the training data 
into a training set, a validating set 
and a test set, each with 20,000 data points, 
and apply knock-out shift on class $y=5$~\footnote{Random choice for illustration, method works on all classes.}. 
Note that $p(y)$ and $q(y)$
differ increasingly as $\delta$ grows large,
making shift detection easier. 
We obtain $f$ by training 
a two-layer ReLU-activated Multilayer Perceptron (MLP) 
with 256 neurons on the training set for five epochs. 
We conduct a two-sample test of
whether the distribution of 
$f(\text{Validation Set})$ and $f(\text{Test Set})$ 
are the same using the Kolmogorov-Smirnov test.  
The results, summarized in Figure~\ref{fig:exp_hypothesis_testing}, 
demonstrate that BBSD 
(1) produces a $p$-value that distributes uniformly 
when $\delta=0$ 
\footnote{Thus we can 
control Type I error at any 
significance level.}
 (2) provides more power (less Type II error) 
 than the state-of-the-art kernel two-sample test 
 that discriminates $p(\vct x)$ and $q(\vct x)$ 
 at $\delta=0.5$, 
 and (3) gets better 
 as we train the black-box predictor even more.

\textbf{Weight estimation and label-shift correction } 
We evaluate \algo{} 
on MNIST by simulating label shift 
and datasets of various sizes. 
Specifically, 
we split the training data set randomly in two,
using first half to train $f$ 
and the second half to estimate $\vct w$. 
We use then use the full training set for weighted ERM. 
As before, $f$ is a two-layer MLP.
For fair comparisons with baselines, 
the full training data set is used throughout 
(since they do not need $f$ 
without data splitting). 
We evaluate our estimator $\hat{\vct w}$ 
against the ground truth $\vct w$ 
and by the prediction accuracy of BBSC on the test set.
To cover a variety of different types of label-shift,
we take $p(y)$ as a uniform distribution and generate $q(y)$ with \emph{Dirichlet shift} 
for $\alpha=0.1, 1.0, 10.0$ (Figure~\ref{fig:exp_increase_n_dirichlet}).

\begin{figure}[tb]
	\centering
  	\includegraphics[width=.9\columnwidth]{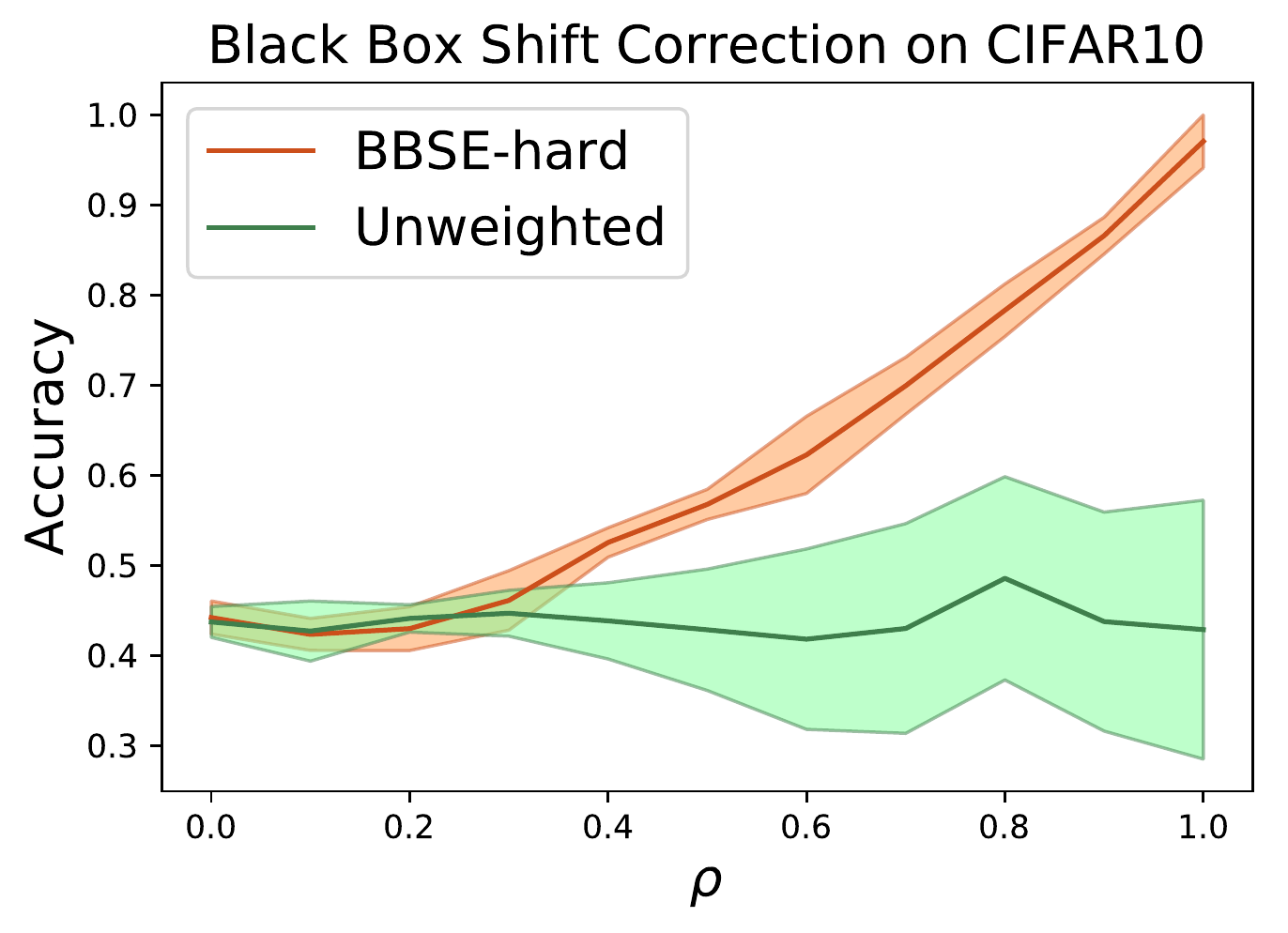}
  	\includegraphics[width=.9\columnwidth]{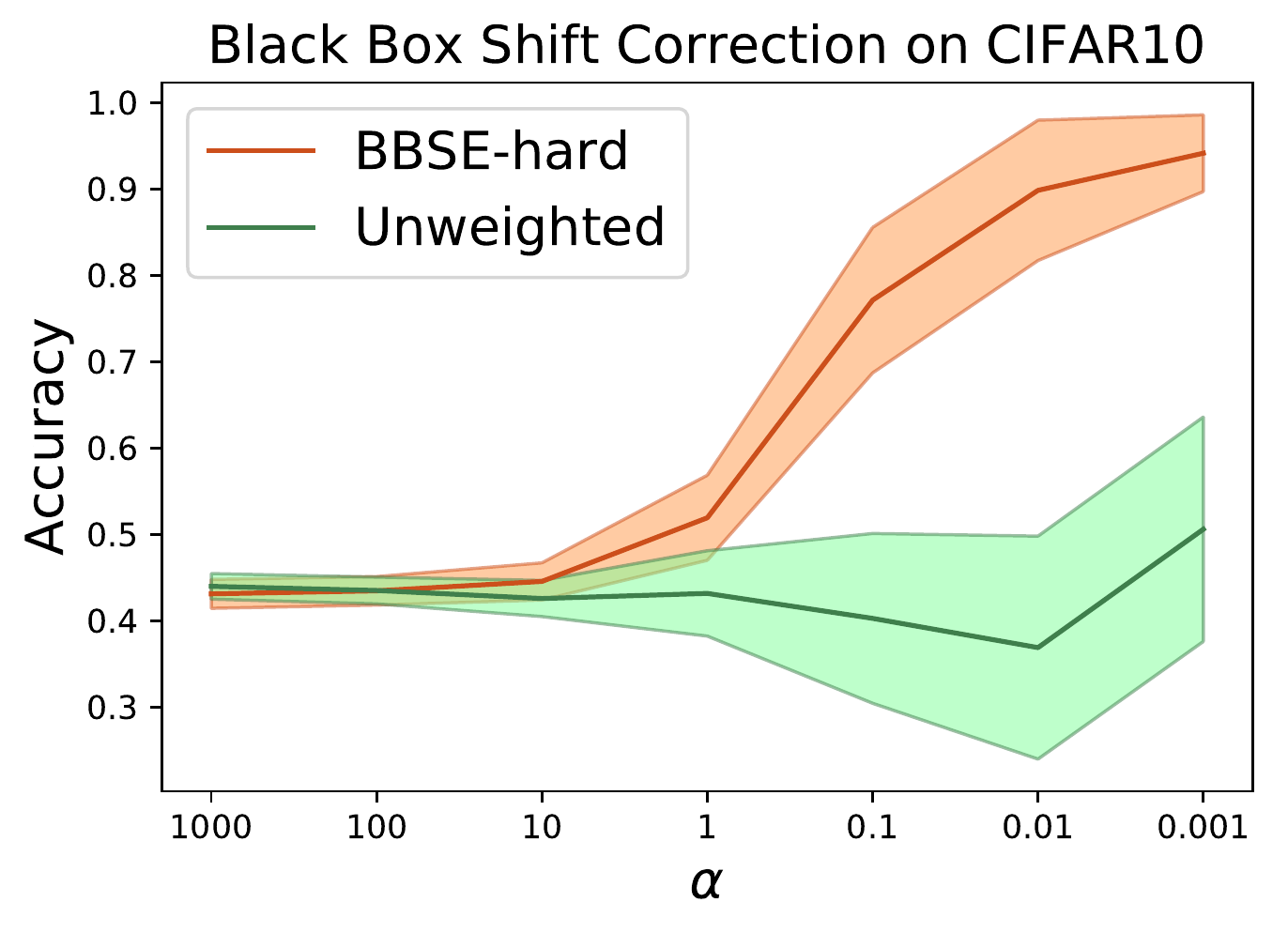}
  	\caption{Accuracy of BBSC on CIFAR 10 with (top) tweak-one shift and
    (bottom) Dirichlet shift. 
    \label{fig:cifar10}}
    \vspace{-10px}
\end{figure}

\textbf{Label-shift correction for CIFAR10 } 
Next, we extend our experiments to the CIFAR dataset,
using the same MLP and this time allowing it 
to train for 10 epochs.
We consider both tweak-one and Dirichlet shift,
and compare \algo{} to the unweighted classifier under varying degrees of shift (Figure \ref{fig:cifar10}).
For the tweak-one experiment, 
we try $\rho \in \{0.0, 0.1, ..., 1.0\}$,
averaging results over all 10 choices 
of the tweaked label, and plotting the variance.
For the Dirichlet experiments, 
we sample $20$ $q(y)$ for every choice of $\alpha$ 
in the range \{1000, 100, ..., .001\}.
Because kernel-based baselines cannot handle datasets this large or high-dimensional,  
we compare only to unweighted ERM.

\textbf{Kernel mean matching (KMM) baselines }
We compare \algo{} to the state-of-the-art 
kernel mean matching (KMM) methods. 
For the detection experiments (Figure~\ref{fig:exp_hypothesis_testing}), 
our baseline is the kernel B-test~\citep{zaremba2013b}, 
an extension of the kernel max mean discrepancy (MMD) 
test due to \citet{gretton2012kernel} 
that boasts nearly linear-time computation 
and little loss in power.
We compare \algo{} to a KMM approach \citet{zhang2013domain}, that solves
$$
\min_{\vct w} \|  \mat C_{\vct x| y} (\vct \nu_{y}\circ \vct w) -  \vct \mu_{\vct x}\|_{\cH}^2,
$$
where we use operator 
$\mat C_{\vct x| y} := \E[\phi(\vct x) | \psi(y)]$ 
and function $\vct \mu_{x} := \E_Q[\phi(\vct x)]$ 
to denote the kernel embedding 
of $p(\vct x|y)$ and $p_Q(x)$ respectively. 
Note that under the label-shift assumption, 
$\mat C_{\vct x| y}$ is the same for $P$ and $Q$. 
Also note that since $\cY$ is discrete, 
$\psi(y)$ is simply the one-hot representation of $y$, 
so $\vct \nu_{y}$ is the same as our definition before 
and $\mat C_{\vct x| y}$, 
$\vct \nu_{y}$ and $\vct \mu_{\vct x}$ 
must be estimated from finite data.
The proposal involves a constrained optimization 
by solving a Gaussian process regression 
with automatic hyperparameter choices 
through marginal likelihood. 

For fair comparison, 
we used the original authors' implementations 
as baselines \footnote{\url{https://github.com/wojzaremba/btest}, \url{http://people.tuebingen.mpg.de/kzhang/Code-TarS.zip}} and also used the \emph{median trick} 
to adaptively tune the RBF kernel's hyperparameter. 
\textbf{A key difference} is that \algo{}
matches the distribution of $\hat{y}$ rather than distribution of $\vct x$ like \citep{zhang2013domain} 
and we learn $f$ through supervised learning 
rather than by specifying a feature map 
$\phi$ by choosing a kernel up front. 

Note that KMM, like many kernel methods,
requires the construction and inversion 
of an $n\times n$ Gram matrix, 
which has complexity of $O(n^3)$. 
This hinders its application 
to real-life machine learning problems 
where $n$ will often be 100s of thousands. 
In our experiments, 
we find that the largest $n$ 
for which we can feasibly run the KMM code 
is roughly $8,000$ 
and that is where we unfortunately 
have to stop for the MNIST experiment. 
For the same reason, we cannot run KMM for the CIFAR10 experiments. 
The MSE curves in Figure~\ref{fig:exp_increase_n_dirichlet} 
for estimating $\vct w$ suggest 
that the convergence rate of KMM 
is slower than \algo{} 
by a polynomial factor and 
that \algo{} better handles large datasets.

\vspace{-2px}

\section{Discussion}
\label{sec:discussion}


\textbf{Constructing the training Set }
The error bounds on our estimates 
depend on the norm of the true vector $w(y):= q(y)/p(y)$. This confirms the common sense that absent any assumption on $q(y)$,
and given the ability to select class-conditioned examples for annotations
one should build a dataset 
with uniform $p(y)$.
Then it's always possible to apply \algo{}
successfully at test time to correct $f$.


\textbf{Sporadic Shift }
In some settings, $p(y)$
might change only sporadically.
In these cases, when no label shift occurs, applying BBSC might damage the classifier.
For these cases, 
we prose to combine detection and estimation,
correcting the classifier only when a shift 
has likely occurred.

\textbf{Using known predictor }
In our experiments, $f$ has been trained using a random split of the data set, which makes \algo to perform worse than baseline when the data set is extremely small.  In practice, especially in the context of web services, there could be a natural predictor $f$ that is currently being deployed whose training data were legacy and have little to do with the two distributions that we are trying to distinguish. In this case, we do not lose that factor of $2$ and we do not suffer from the variance in training $f$ with a small amount of data. This could allow us to detect mild shift in distributions in very short period of time. Making it suitable for applications such as financial market prediction. 

\textbf{\algo{} with degenerate confusion matrices }
In practice, sometime confusion matrices will be degenerate. 
For instance, when a class $i$ is rare under $P$, 
and the features are only partially predictive, 
we might find that 
$p(f(\vct x)=i) = 0$.
In these cases, two straightforward variations on the black box method may still work:
First, while our analysis focuses on confusion matrices, it easily extends to any operator $f$,
such as soft probabilities. 
If each class $i$, 
even if $i$ is never the argmax for any example,
so long as $p(\hat{y}=i |y=i) > p(\hat{y}=i|y=j)$
for any $j \neq i$, 
the soft confusion matrix will be invertible.
Even when we produce and operator with an invertible confusion matrix, 
two options remain:
We can merge $c$ classes together, yielding a $(k-c) \times (k-c)$ 
invertible confusion matrix.
While we might not be able to estimate the frequencies 
of those $c$ classes,
we can estimate the others accurately. 
Another possibility is to compute the pseudo-inverse.

\textbf{Future Work }
As a next step, we plan to extend our methodology to the streaming setting.
In practice, label distributions
tend to shift progressively, 
presenting a new challenge: 
if we apply \algo{}
on trailing windows,
then we face a trade-off.
Looking far back increases $m$,
lowering estimation error,
but the estimate will be less fresh. 
The use of propensity weights $w$ on $y$ 
makes \algo{} amenable to doubly-robust estimates, 
the typical bias-variance tradeoff, 
and related techniques, 
common in covariate shift correction.

\vspace{-2px}

\section*{Acknowledgments}
\label{sec:acknowledgments}
We are grateful for extensive insightful discussions and feedback from Kamyar Azizzadenesheli, Kun Zhang, Arthur Gretton, Ashish Khetan Kumar, Anima Anandkumar, Julian McAuley, Dustin Tran, Charles Elkan, Max G'Sell, Alex Dimakis, Gary Marcus, and Todd Gureckis.


\bibliographystyle{apa-good}
\bibliography{labelshift}

\begin{thebibliography}{24}
\expandafter\ifx\csname natexlab\endcsname\relax\def\natexlab#1{#1}\fi
\expandafter\ifx\csname url\endcsname\relax
  \def\url#1{{\tt #1}}\fi
\expandafter\ifx\csname urlprefix\endcsname\relax\def\urlprefix{URL }\fi

\bibitem[{Bickel et~al.(2009)Bickel, Br{\"u}ckner, \&
  Scheffer}]{bickel2009discriminative}
Bickel, S., Br{\"u}ckner, M., \& Scheffer, T. (2009).
\newblock Discriminative learning under covariate shift.
\newblock {\em Journal of Machine Learning Research\/}, {\em 10\/}(Sep),
  2137--2155.

\bibitem[{Bishop(1995)}]{bishop1995neural}
Bishop, C.~M. (1995).
\newblock {\em Neural networks for pattern recognition\/}.
\newblock Oxford university press.

\bibitem[{Buck et~al.(1966)Buck, Gart et~al.}]{buck1966comparison}
Buck, A., Gart, J., et~al. (1966).
\newblock Comparison of a screening test and a reference test in epidemiologic
  studies. ii. a probabilistic model for the comparison of diagnostic tests.
\newblock {\em American Journal of Epidemiology\/}.

\bibitem[{Chan \& Ng(2005)}]{chan2005word}
Chan, Y.~S., \& Ng, H.~T. (2005).
\newblock Word sense disambiguation with distribution estimation.
\newblock In {\em Proceedings of the 19th international joint conference on
  Artificial intelligence\/}, (pp. 1010--1015). Morgan Kaufmann Publishers Inc.

\bibitem[{Deng et~al.(2009)Deng, Dong, Socher, Li, Li, \&
  Fei-Fei}]{deng2009imagenet}
Deng, J., Dong, W., Socher, R., Li, L.-J., Li, K., \& Fei-Fei, L. (2009).
\newblock Imagenet: A large-scale hierarchical image database.
\newblock In {\em CVPR\/}.

\bibitem[{Elkan(2001)}]{elkan2001foundations}
Elkan, C. (2001).
\newblock The foundations of cost-sensitive learning.
\newblock In {\em IJCAI\/}.

\bibitem[{Forman(2008)}]{forman2008quantifying}
Forman, G. (2008).
\newblock Quantifying counts and costs via classification.
\newblock {\em Data Mining and Knowledge Discovery\/}.

\bibitem[{Gretton et~al.(2012)Gretton, Borgwardt, Rasch, Sch{\"o}lkopf, \&
  Smola}]{gretton2012kernel}
Gretton, A., Borgwardt, K.~M., Rasch, M.~J., Sch{\"o}lkopf, B., \& Smola, A.
  (2012).
\newblock A kernel two-sample test.
\newblock {\em Journal of Machine Learning Research\/}, {\em 13\/}(Mar),
  723--773.

\bibitem[{Gretton et~al.(2009)Gretton, Smola, Huang, Schmittfull, Borgwardt, \&
  Sch{\"o}lkopf}]{gretton2009covariate}
Gretton, A., Smola, A.~J., Huang, J., Schmittfull, M., Borgwardt, K.~M., \&
  Sch{\"o}lkopf, B. (2009).
\newblock Covariate shift by kernel mean matching.
\newblock {\em Journal of Machine Learning Research\/}.

\bibitem[{Heckman(1977)}]{heckman1977sample}
Heckman, J.~J. (1977).
\newblock Sample selection bias as a specification error (with an application
  to the estimation of labor supply functions).

\bibitem[{Huang et~al.(2007)Huang, Gretton, Borgwardt, Sch{\"o}lkopf, \&
  Smola}]{huang2007correcting}
Huang, J., Gretton, A., Borgwardt, K.~M., Sch{\"o}lkopf, B., \& Smola, A.~J.
  (2007).
\newblock Correcting sample selection bias by unlabeled data.
\newblock In {\em Advances in neural information processing systems\/}.

\bibitem[{Kang \& Schafer(2007)}]{kang2007demystifying}
Kang, J.~D., \& Schafer, J.~L. (2007).
\newblock Demystifying double robustness: A comparison of alternative
  strategies for estimating a population mean from incomplete data.
\newblock {\em Statistical science\/}, {\em 22\/}(4), 523--539.

\bibitem[{Manski \& Lerman(1977)}]{manski1977estimation}
Manski, C.~F., \& Lerman, S.~R. (1977).
\newblock The estimation of choice probabilities from choice based samples.
\newblock {\em Econometrica: Journal of the Econometric Society\/}.

\bibitem[{Ramdas et~al.(2015)Ramdas, Reddi, P{\'o}czos, Singh, \&
  Wasserman}]{ramdas2015decreasing}
Ramdas, A., Reddi, S.~J., P{\'o}czos, B., Singh, A., \& Wasserman, L.~A.
  (2015).
\newblock On the decreasing power of kernel and distance based nonparametric
  hypothesis tests in high dimensions.
\newblock In {\em AAAI\/}, (pp. 3571--3577).

\bibitem[{Rosenbaum \& Rubin(1983)}]{rosenbaum1983central}
Rosenbaum, P.~R., \& Rubin, D.~B. (1983).
\newblock The central role of the propensity score in observational studies for
  causal effects.
\newblock {\em Biometrika\/}, {\em 70\/}(1), 41--55.

\bibitem[{Saerens et~al.(2002)Saerens, Latinne, \&
  Decaestecker}]{saerens2002adjusting}
Saerens, M., Latinne, P., \& Decaestecker, C. (2002).
\newblock Adjusting the outputs of a classifier to new a priori probabilities:
  a simple procedure.
\newblock {\em Neural computation\/}, {\em 14\/}(1), 21--41.

\bibitem[{Sch{\"o}lkopf et~al.(2012)Sch{\"o}lkopf, Janzing, Peters, Sgouritsa,
  Zhang, \& Mooij}]{scholkopf2012causal}
Sch{\"o}lkopf, B., Janzing, D., Peters, J., Sgouritsa, E., Zhang, K., \& Mooij,
  J. (2012).
\newblock On causal and anticausal learning.
\newblock In {\em International Coference on International Conference on
  Machine Learning (ICML-12)\/}, (pp. 459--466). Omnipress.

\bibitem[{Shimodaira(2000)}]{shimodaira2000improving}
Shimodaira, H. (2000).
\newblock Improving predictive inference under covariate shift by weighting the
  log-likelihood function.
\newblock {\em Journal of statistical planning and inference\/}.

\bibitem[{Storkey(2009)}]{storkey2009training}
Storkey, A. (2009).
\newblock When training and test sets are different: characterizing learning
  transfer.
\newblock {\em Dataset shift in machine learning\/}.

\bibitem[{Sugiyama et~al.(2008)Sugiyama, Nakajima, Kashima, Buenau, \&
  Kawanabe}]{sugiyama2008direct}
Sugiyama, M., Nakajima, S., Kashima, H., Buenau, P.~V., \& Kawanabe, M. (2008).
\newblock Direct importance estimation with model selection and its application
  to covariate shift adaptation.
\newblock In {\em Advances in neural information processing systems\/}.

\bibitem[{Zadrozny(2004)}]{zadrozny2004learning}
Zadrozny, B. (2004).
\newblock Learning and evaluating classifiers under sample selection bias.
\newblock In {\em Proceedings of the twenty-first international conference on
  Machine learning\/}, (p. 114). ACM.

\bibitem[{Zaremba et~al.(2013)Zaremba, Gretton, \& Blaschko}]{zaremba2013b}
Zaremba, W., Gretton, A., \& Blaschko, M. (2013).
\newblock B-test: A non-parametric, low variance kernel two-sample test.
\newblock In {\em Advances in neural information processing systems\/}, (pp.
  755--763).

\bibitem[{Zhang et~al.(2013)Zhang, Sch{\"o}lkopf, Muandet, \&
  Wang}]{zhang2013domain}
Zhang, K., Sch{\"o}lkopf, B., Muandet, K., \& Wang, Z. (2013).
\newblock Domain adaptation under target and conditional shift.
\newblock In {\em International Conference on Machine Learning\/}, (pp.
  819--827).

\bibitem[{Zhu et~al.(2010)Zhu, Gibson, Jun, Rogers, Harrison, \&
  Kalish}]{zhu2010cognitive}
Zhu, X., Gibson, B.~R., Jun, K.-S., Rogers, T.~T., Harrison, J., \& Kalish, C.
  (2010).
\newblock Cognitive models of test-item effects in human category learning.
\newblock In {\em ICML\/}.

\end{thebibliography}

\appendix
\newpage

\section{Additional discussion}
In this section we provide a few answers to some questions people may have when using our proposed techniques.

\paragraph{What if the label-shift assumption does not hold?}
In many applications, we do not know whether label-shift is a reasonable assumption or not. In particular, whenever there are unobserved variables that affects both $\vct x$ and $y$, then neither label-shift nor covariate-shift is true.  However, label shift could still be a good approximation in the in the finite sample environment.
Luckily, we can test whether the label-shift assumption is a good approximation in a data-driven fashion via the kernel two-sample tests. In particular, let $\phi: \cX \rightarrow \cF$ be an arbitrary feature map that (possibly reduces the dimension of $\vct x$) and $k: \cF\times \cF \rightarrow \R$ be the kernel function that induces a RKHS $\cH$. Let $\vct w  =  [q(y)/ p(y)]_{y=1,...,k}$, then
$$
\E_{p} \left[  \vct w(y)  k(\phi( \vct x),\cdot)\right]   =    \E_{q}  \left[ k(\phi( \vct x),\cdot) \right].
$$
The LHS can be estimated by plugging in $\hat{\vct w}$ and a stochastic approximation of the expectation using labeled data from the source domain and the RHS can be estimated by the sample mean using unlabeled data from the target domain.  In particular, if label-shift assumption is true or a good approximation, then
$$\| \frac{1}{n}\sum_{i=1}^n\left[  \hat{\vct w}(y_i)  k(\phi( \vct x_i),\cdot)\right] -  \frac{1}{m}\sum_{j=1}^m k(\phi( \vct x'_j),\cdot)  \|_\cH^2$$
should be on the same order as the statistical error that we can calculate by $m,n$ and the error of $\hat{\vct w}$ in estimating  $\vct w$. 

\paragraph{Model selection criterion and the choice of $f$.}
Our analysis assumes that $f$ is fixed and given, but in practice, often we need to train $f$ from the same data set. Given a number of choices, one may wonder which blackbox predictor $f$ should we prefer out of a collection of $\cF$?
Our theoretical results suggest a natural quantity: the smallest singular value of the confusion matrix, for choosing the blackbox predictors. Note that the smallest singular value is a quantity that can be estimated using only labeled data from the source domain. Therefore a practical heuristic to use is to the $f$ that maximizes the smallest singular value of the corresponding $\hat{C}_f$.  Figure~\ref{fig:smallestsigma} plots the smallest singular value of the confusion matrices as the number of epochs of training $f$ gets larger. The model we use is the same multi-layer perceptron that we used for our experiments and the source distribution is one that we knocks off 80\% of the fifth class. This is the same model and data set we used in  Figure~\ref{fig:power_vs_oracle}. Referring to $\delta=0.8$ in Figure~\ref{fig:power_vs_oracle}, we see that the test power of $f$ that is trained for only one epoch is much lower than the $f$ that is trained for five epochs, and the gap in the smallest singular values is predicative of the fact at least qualitatively.
\begin{figure}[h!]
	\centering
	\includegraphics[width=.9\columnwidth]{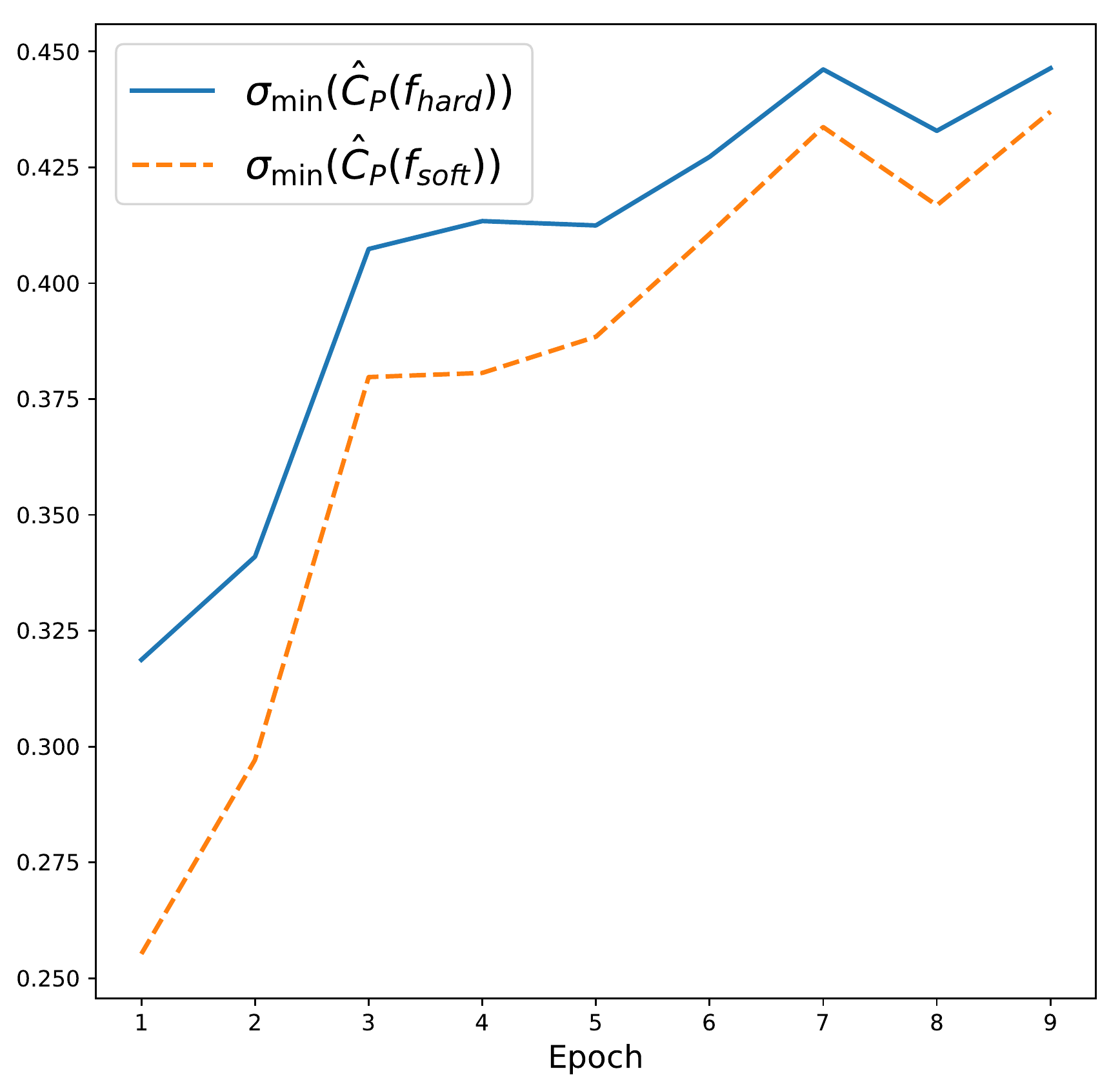}
	\caption{The smallest singular value of the estimated confusion matrix: $\hat{C}_f$ under distribution $p$ as a function of the number of epochs we train the classifiers on.
		\label{fig:smallestsigma}}
\end{figure}

\paragraph{Is data splitting needed?}
Recall that we train the model $f$ and estimate $\vct w$ using two independent splits of the labeled data set drawn from the same distribution. In practice, especially when $n$ is large, using the same data to train $f$ and to estimate $\vct w$ will be more data efficient. This comes at a price of a small bias. It is unclear how to quantify that bias but the data-reuse version could be useful in practice as a heuristic.

\section{Proofs}\label{app:proofs}
We present the proofs of Lemma~\ref{lem:matching_confusion} and Proposition~\ref{prop:consistency} in this Appendix.

\begin{proof}[Proof of Lemma~\ref{lem:matching_confusion}]
	By the law of total probability
\begin{align*}
	&q(\hat{y} | y)  = \sum_{y\in\cY}  q(\hat{y} | \vct x, y) q(\vct x|y)  =  \sum_{y\in\cY}  q(\hat{y} | \vct x, y) p(\vct x|y)  \\ 
	&=  \sum_{y\in\cY}  p_{f}(\hat{y} | \vct x)  p(\vct x|y)   = \sum_{y\in\cY}  p(\hat{y} | \vct x, y) p(\vct x|y) = p(\hat{y} | y).
	\end{align*}
We applied A.1 to the second equality, and used the conditional independence $\hat{y}\independent y  |  x$ under 
$P$ and $Q$ together with $p(\hat{y} | \vct x)$ being 
determined by $f$, which is fixed. 
\vspace{-15px}
\end{proof}

\bigskip

\begin{proof}[Proof of Proposition~\ref{prop:consistency}]
A.2 ensures 
that $\vct w<\infty$.	 
By Assumption A.3, ${\mat C}_{\hat{y},y}$ is invertible. Let $\delta>0$ be its smallest singular value. We bound the probability that 
$\hat{\mat C}_{\hat{y},y}$ is not invertible:

\begin{align*}
&\P(\hat{\mat C}_{\hat{y},y}\text{ is not invertible}) \leq  \P(\sigma_{\min}(\hat{\mat C}_{\hat{y},y}) < \delta/2) \\
& \leq \P(\|\hat{\mat C}_{\hat{y},y}-\mat C_{\hat{y},y}\|_2 \geq \delta/2 ) \explain{\leq}{\text{pigeon hole}} \P(\|\hat{\mat C}_{\hat{y},y}-\mat C_{\hat{y},y}\|_F \geq \frac{\delta}{2\sqrt{k}}) \\
&\explain{\leq}{\text{pigeon hole}} \P(\exists (i,j)\in[k]^2, \text{ s.t.} |[\hat{\mat C}_{\hat{y},y}]_{i,j}- [\mat C_{\hat{y},y}]_{i,j}| \geq \frac{\delta}{2k^{1.5}}) \explain{\leq}{\text{Hoeffding}} 2e^{-\frac{n\delta^2}{4k^3}}.
\end{align*}
By the convergence of geometric series
$
\sum_{n}\P(\hat{\mat C}_{\hat{y},y}\text{ is not invertible})  < +\infty$.
This allows us to invoke the First Borel-Cantelli Lemma, which shows
\begin{equation}\label{eq:almostsure_invertible}
\P(\hat{\mat C}_{\hat{y},y}\text{ is not invertible} \text{ i.o.})  = 0.
\end{equation}
This ensures that as $n\rightarrow \infty$, $\hat{\mat C}_{\hat{y},y}$ is invertible almost surely.
By the strong law of large numbers (SLLN), as $n\rightarrow \infty$
$\hat{\mat C}_{\hat{y},y}\overset{\text{a.s.}}{\longrightarrow} \mat C_{\hat{y},y}$
and
$\hat{\vct \nu}_{y} \overset{\text{a.s.}}{\longrightarrow} \vct \nu_{y}$.
Similarly, as $m\rightarrow \infty$, 
$\hat{\vct \mu}_{\hat{y}}  \overset{\text{a.s.}}{\longrightarrow}  \vct \mu_{\hat{y}}$.
Combining these with \eqref{eq:almostsure_invertible} and applying the continuous mapping theorem with the fact that the 
inverse of an invertible matrix is a continuous mapping we get that 
$$
\hat{\vct w} =  [\hat{\mat C}_{\hat{y},y}]^{-1} \hat{\vct \mu}_{\hat{y}}   \overset{\text{a.s.}}{\longrightarrow}  \vct w, \text{ and }
\hat{\vct \mu}_{y} =  \diag(\hat{\vct \nu}_y) \hat{\vct w} \overset{\text{a.s.}}{\longrightarrow}   \vct \mu_{y}.
$$
\end{proof}

\section{Concentration inequalities}
\label{sec:technical-lemmas}
\begin{lemma}[Hoeffding's inequality]
Let $x_1,...,x_n$ be independent random variables bounded by $[a_i,b_i]$.
Then $\bar{x} = \frac{1}{n}\sum_{i=1}^n x_i$ obeys for any $t>0$
$$
\P(\left|\bar{x} - \E[\bar{x}]\right|\geq t) \leq 2\exp\left(-\frac{2n^2t^2}{\sum_{i=1}^n(b_i-a_i)^2}\right).
$$
\end{lemma}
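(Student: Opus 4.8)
The plan is to apply the Chernoff (exponential-moment) method. First I would reduce to a one-sided estimate: replacing $x_i$ by $-x_i$ (which lies in $[-b_i,-a_i]$, an interval of the same length) and using a union bound, it suffices to prove $\P(\bar x - \E[\bar x] \ge t) \le \exp\!\big(-2n^2t^2/\sum_{i=1}^n (b_i-a_i)^2\big)$ and then multiply the resulting bound by $2$. Set $S := \sum_{i=1}^n (x_i - \E x_i)$, so the event of interest is $\{S \ge nt\}$. For any $\lambda > 0$, Markov's inequality applied to the nonnegative variable $e^{\lambda S}$ gives $\P(S \ge nt) \le e^{-\lambda n t}\,\E[e^{\lambda S}]$, and by independence of the $x_i$ we have $\E[e^{\lambda S}] = \prod_{i=1}^n \E[e^{\lambda (x_i - \E x_i)}]$.

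The heart of the argument is Hoeffding's lemma: if $Y$ is a random variable with $\E Y = 0$ and $Y \in [a,b]$ almost surely, then $\E[e^{\lambda Y}] \le \exp(\lambda^2(b-a)^2/8)$ for every $\lambda$. I would prove this from convexity of $s \mapsto e^{\lambda s}$ on $[a,b]$, which yields the pointwise bound $e^{\lambda Y} \le \frac{b-Y}{b-a}e^{\lambda a} + \frac{Y-a}{b-a}e^{\lambda b}$; taking expectations and using $\E Y = 0$ gives $\E[e^{\lambda Y}] \le \frac{b}{b-a}e^{\lambda a} - \frac{a}{b-a}e^{\lambda b} =: e^{\phi(\lambda)}$. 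A short computation shows $\phi(0) = \phi'(0) = 0$, and that $\phi''(\lambda)$ is the variance of a random variable supported on $\{a,b\}$, hence $\phi''(\lambda) \le (b-a)^2/4$. Taylor's theorem with remainder then gives $\phi(\lambda) \le \lambda^2(b-a)^2/8$.

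Combining the pieces, $\E[e^{\lambda S}] \le \exp\!\big(\tfrac{\lambda^2}{8}\sum_{i=1}^n (b_i-a_i)^2\big)$, so $\P(S \ge nt) \le \exp\!\big(-\lambda n t + \tfrac{\lambda^2}{8}\sum_{i=1}^n (b_i-a_i)^2\big)$. Minimizing the quadratic in $\lambda$ at $\lambda^\star = 4nt/\sum_{i=1}^n(b_i-a_i)^2$ yields exponent $-2n^2t^2/\sum_{i=1}^n(b_i-a_i)^2$, and doubling to cover the lower tail produces the claimed two-sided bound.

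The step I expect to be the main obstacle is Hoeffding's lemma, and within it the estimate $\phi''(\lambda) \le (b-a)^2/4$: one has to recognize $\phi''(\lambda)$ as a variance of a $\{a,b\}$-valued variable and invoke Popoviciu's inequality (or argue directly, e.g.\ via AM--GM). The remaining steps --- Markov's inequality, factoring over independent coordinates, and optimizing a one-variable quadratic --- are routine.
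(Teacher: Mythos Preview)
Your proposal is correct and follows the classical Chernoff-bound derivation of Hoeffding's inequality. The paper itself does not prove this lemma at all: it is listed in the appendix on concentration inequalities as a standard cited result, with no accompanying proof. So there is nothing to compare against; your argument is the standard textbook route and would fill the gap cleanly.
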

\begin{lemma}[Matrix Bernstein Inequality (rectangular case)]\label{lem:matrixbernstein}
	Let $\mat Z_1,...,\mat Z_n$ be independent random matrices with dimension $d_1\times d_2$ and each satisfy 
	$$\E \mat Z_i  =  \mat 0\text{ and }  \|\mat Z_i\| \leq R$$
	almost surely. Define the variance parameter
	$$
	\sigma^2 - \max\{ \|\sum_i \E[\mat Z_i  \mat Z_i^T]\|, \|\sum_i\E[\mat Z_i^T \mat Z_i]\|  \}.
	$$
	Then for all $t\geq 0$,
	$$
	\P\left(  \|\sum_i \mat Z_i\| \geq t\right) \leq (d_1 + d_2) \cdot e^{\frac{-t^2}{\sigma^2 + Rt/3}}.
	$$
\end{lemma}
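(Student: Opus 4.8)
This is a standard concentration result (the rectangular-case matrix Bernstein bound; see, e.g., Tropp's matrix concentration surveys), and in the body of the paper I would simply cite it. For completeness, here is the route I would take to prove it.

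The plan is first to \emph{reduce the rectangular case to the Hermitian case via the dilation trick}. For a $d_1\times d_2$ matrix $Z$, introduce the Hermitian dilation
$$
\mathcal{H}(Z) = \begin{bmatrix}\mat 0 & Z\\ Z^T & \mat 0\end{bmatrix}\in\R^{(d_1+d_2)\times(d_1+d_2)},
$$
which is linear in $Z$, satisfies $\lambda_{\max}(\mathcal{H}(Z))=\|\mathcal{H}(Z)\|=\|Z\|$ (its eigenvalues are $\pm$ the singular values of $Z$), and $\mathcal{H}(Z)^2=\diag(ZZ^T,\,Z^TZ)$. Hence $\mathcal{H}(\mat Z_1),\dots,\mathcal{H}(\mat Z_n)$ are independent Hermitian random matrices with $\E\,\mathcal{H}(\mat Z_i)=\mat 0$, $\|\mathcal{H}(\mat Z_i)\|\le R$, and $\big\|\sum_i\E[\mathcal{H}(\mat Z_i)^2]\big\|=\max\{\|\sum_i\E[\mat Z_i\mat Z_i^T]\|,\ \|\sum_i\E[\mat Z_i^T\mat Z_i]\|\}=\sigma^2$, while $\|\sum_i\mat Z_i\|=\lambda_{\max}(\sum_i\mathcal{H}(\mat Z_i))$. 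So it suffices to prove the one-sided Hermitian bound $\P(\lambda_{\max}(\sum_iX_i)\ge t)\le d\,e^{-t^2/(\sigma^2+Rt/3)}$ in ambient dimension $d:=d_1+d_2$.

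Next I would run the \emph{matrix Laplace-transform (Chernoff) argument}. For any $\theta>0$, applying Markov's inequality to the positive semidefinite matrix $e^{\theta\sum_iX_i}$ gives $\P(\lambda_{\max}(\sum_iX_i)\ge t)\le e^{-\theta t}\,\E\,\tr\exp(\theta\sum_iX_i)$. The key step is to push the expectation through the trace-exponential using \emph{Lieb's concavity theorem}, which, iterated over the independent summands, yields $\E\,\tr\exp(\theta\sum_iX_i)\le\tr\exp(\sum_i\log\E\,e^{\theta X_i})$. I would then control each matrix cumulant by transferring the scalar inequality $e^{\theta s}\le 1+\theta s+g(\theta)s^2$, valid for $|s|\le R$ with $g(\theta)=(e^{\theta R}-\theta R-1)/R^2$, to matrices via the spectral mapping theorem: this gives $\E\,e^{\theta X_i}\preceq\mat I+g(\theta)\,\E[X_i^2]$, hence $\log\E\,e^{\theta X_i}\preceq g(\theta)\,\E[X_i^2]$ by operator monotonicity of $\log$. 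Monotonicity of $A\mapsto\tr\exp A$ together with $\lambda_{\max}(\sum_i\E[X_i^2])\le\sigma^2$ then yields $\tr\exp(\sum_i\log\E\,e^{\theta X_i})\le d\,e^{g(\theta)\sigma^2}$, so $\P(\lambda_{\max}(\sum_iX_i)\ge t)\le d\exp(-\theta t+g(\theta)\sigma^2)$.

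Finally I would \emph{optimize over $\theta$}: taking $\theta=R^{-1}\log(1+Rt/\sigma^2)$ and using the elementary inequality $(1+u)\log(1+u)-u\ge\tfrac{u^2/2}{1+u/3}$ with $u=Rt/\sigma^2$ collapses the exponent to the standard Bernstein form $-\tfrac{t^2/2}{\sigma^2+Rt/3}$, which together with $d=d_1+d_2$ yields the stated inequality (up to the universal constant in the exponent). The one genuine obstacle is Lieb's concavity theorem — everything else is operator-monotonicity bookkeeping and a one-variable calculus optimization — which is precisely why, rather than reproduce this machinery, I would invoke the matrix Bernstein inequality directly.
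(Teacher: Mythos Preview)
Your proposal is correct, and in fact goes further than the paper does: the paper states this lemma without proof as a standard concentration inequality (alongside Hoeffding's inequality) in the technical-lemmas appendix and simply invokes it where needed. Your sketch via the Hermitian dilation, the matrix Laplace transform with Lieb's concavity, the scalar Bernstein bound on $[-R,R]$ transferred spectrally, and the final optimization in $\theta$ is exactly the standard Tropp route and is sound; you also correctly note that the exponent one obtains this way is $-\tfrac{t^2/2}{\sigma^2+Rt/3}$, which matches the paper's stated bound up to the constant in the exponent.
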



\end{document}